\documentclass[a4paper]{article}

\usepackage{amsfonts,amsopn,amsthm}
\usepackage{geometry,authblk}
\usepackage{graphicx}
\usepackage{algorithm,algpseudocode}
\usepackage{mathtools}
\usepackage{nicematrix,booktabs,tabularx,multirow,multicol}
\usepackage{cleveref,enumitem}
\usepackage{subcaption}
\usepackage{siunitx}
\usepackage[normalem]{ulem}

\newtheorem{theorem}{Theorem}[section]

\theoremstyle{definition}
\newtheorem{definition}[theorem]{Definition}

\newtheorem{proposition}[theorem]{Proposition}

\theoremstyle{remark}

\numberwithin{equation}{section}

\newcommand{\stkout}[1]{\ifmmode\text{\sout{\ensuremath{#1}}}\else\sout{#1}\fi}

\numberwithin{equation}{section}
\crefname{equation}{}{}
\crefname{lemma}{Lemma}{Lemmas}
\crefname{proposition}{Proposition}{Proposition}
\crefname{corollary}{Corollary}{Corollaries}
\crefname{theorem}{Theorem}{Theorems}
\crefname{definition}{Definition}{Definitions}
\crefname{table}{Table}{Tables}
\crefname{figure}{Figure}{Figures}
\crefname{algorithm}{Algorithm}{Algorithms}
\crefname{fact}{Fact}{Facts}
\crefname{section}{Section}{Sections}
\crefname{subsection}{Section}{Sections}
\Crefname{subsection}{Section}{Sections}
\crefname{chapter}{Chapter}{Chapters}
\Crefname{chapter}{Chapter}{Chapters}
\crefname{appendix}{Appendix}{Appendices}
\Crefname{appendix}{Appendix}{Appendices}
\graphicspath{{pics/}}
  
\begin{document}

\title{Hybrid Least Squares\slash{}Gradient Descent Methods for MIONets}

\author[1]{Jun Choi}
\author[1]{Chang-Ock Lee}
\author[2]{Minam Moon}
\affil[1]{\small Department of Mathematical Sciences, KAIST, Daejeon 34141, KOREA}
\affil[2]{\small Department of Mathematics, Korea Military Academy, Seoul 01805, KOREA}
\date{}

\maketitle
\begin{abstract}
    In this paper, we propose an efficient hybrid least squares/gradient descent~(LSGD) method 
    for MIONets to accelerate training. This method generalizes the LSGD method for DeepONets. 
    Since MIONet is the sum of the entrywise product of multiple branch networks and a trunk 
    network, it can be viewed as a multilinear function with respect to the last layer 
    parameters of each branch network. 
    These sets of parameters can be optimized using the alternating least squares method, 
    where we solve the LS system for a single branch network in turn. 
    To handle the large-sized system matrix, we introduce Kronecker and Khatri-Rao 
    products and tensor permutation matrices to factor the 
    large matrix into small ones. 
    Our method is compatible with a general type of $L^2$ loss with regularization terms for 
    the last layer parameters of each branch, where linear operators can be applied to the 
    MIONet output in each loss term. 
\end{abstract}

\textbf{Key words.} Hybrid least squares gradient descent method, MIONet,
Kronecker product, Khatri-Rao product, tensor permutation matrix

\textbf{MSC codes.} 15A69, 47-08, 65F45, 65Y10, 68T07, 68T20 

\let\thefootnote\relax\footnotetext{
    \textbf{Funding:} This work was supported by Basic Science Research Program through the National Research 
    Foundation~(NRF) of Korea funded by the Ministry of Education [RS2025--25397599].
}

\section{Introduction}\label{sec1}

Thanks to the recent advances in scientific machine learning, 
the core architectures, including deep learning~(DL) and deep neural networks~(DNNs) 
have migrated to the field of scientific computing to enhance existing numerical methods 
for solving various partial differential equations~(PDEs). 
In particular, physics-informed neural network~(PINN)~\cite{Raissi2019} is the most successful 
and widely used method, where PINN represents the solution of PDE as a DNN and finds the 
solution by training the DNN using a physics-informed loss~(PI-loss) with the automatic 
differentiation method~\cite{Baydin2018}. 
However, since PINN requires separate training for different PDE instances, 
the need for a mapping between components and solutions of PDEs using DL architecture 
has emerged, which has now been generalized to neural operator mapping between function 
spaces. 
There are various examples of neural operators, including Deep Operator 
Network~(DeepONet)~\cite{Lu2021}, Fourier Neural Operator~\cite{Li2020}, 
Graph Kernel Network~\cite{Li2020neural}, PCA-based Model Reduction~\cite{Bhattacharya2021}, 
and Multi-Wavelet Neural Operator~\cite{Gupta2021}. 

Among these neural operators, DeepONet is the most widely used framework for 
neural operators, which possesses the universal approximation property. 
It consists of the inner product of outputs from two neural networks, branch and trunk, 
where the branch network encodes input functions and the trunk network encodes coordinates 
of the output function domain. 
Based on the DeepONet architecture, many variants have been proposed, such as 
POD-DeepONet~\cite{Lu2022}, 
Multifidelity DeepONet~\cite{Lu2022multifidelity}, 
NOMAD~\cite{Seidman2022}, 
Multiple-Input Operator Network~(MIONet)~\cite{Jin2022}, 
Shift-DeepONet~\cite{Hadorn2022}, 
HyperDeepONet~\cite{Lee2023hyper}, and 
Geom-DeepONet~\cite{He2024}. 

In this paper, we focus on MIONet since it is a direct generalization of DeepONet, which maps 
several input functions to a single output function with the corresponding universal 
approximation theorem (UAT)~\cite[Theorem 3.1]{Jin2022}. 
Instead of a single branch network in DeepONet, MIONet uses multiple branch networks to encode 
each input function and computes the entrywise product of the outputs from each branch to 
perform an inner product with the output of the trunk network. 

However, the training for MIONet is challenging because the entrywise product and inner 
product among several networks make the structure more complex, and a sufficiently large 
dataset is needed for meaningful training. 
This makes the conventional MIONet training with the Adam optimizer~\cite{Kingma2017} 
require very high computational resources and time. 

To optimize and accelerate MIONet training, we take a deeper look into the hybrid least 
squares/gradient descent~(LSGD) method for DeepONet~\cite{Choi2025,Cyr2020}. 
We generalize the LSGD method into the MIONet framework with vanilla structure, where 
each output layer of the branch network is a fully connected linear layer. 
For a general type of $L^2$ loss with regularization terms for the last layer parameters of 
the branch networks, we formulate a minimization problem of sums of squared 
multilinear functions in terms of the last layer parameters. 
To optimize this problem, we first fix the last layer parameters of all branches except one, 
then the minimization problem becomes an LS system for the unfixed last layer parameters. 
Now, by generating and solving LS problems for the unfixed last layer parameters, 
alternating branches in sequence, we can optimize the last layer parameters of all the 
branch networks. We call this the alternating least squares~(ALS) method. 
Although each LS system is very large to handle, we can factor the large system matrix into 
smaller matrices from each branch and trunk network, where the column-wise Kronecker 
product~(Khatri-Rao product~\cite{Khatri1968}) and the usual Kronecker product are used. 
Additionally, we introduce tensor permutation matrices~\cite{Rakotonirina2005} to match the row 
order shuffled due to the Kronecker products to the lexicographic order of the dimension 
axes. 
After that, the LS system is transformed into a special type of matrix equation of the form 
$AXB+\lambda X = E$, where the coefficient matrices $A$ and $B$ are from the component matrices 
of the LS system. 
Furthermore, we provide a theorem that helps reduce the complexity of the computation of 
data tensors, when the given data tensors depend only on one input function argument 
of MIONet.
Finally, we propose the ALS plus Adam~(ALS+Adam) method as a practical algorithm for LSGD for 
MIONet, which is a modification of the LS+Adam method~\cite{Choi2025}. 

This paper is organized as follows. In \cref{sec2}, 
we introduce MIONet and provide the UAT for vanilla MIONets. 
We also briefly summarize the LSGD method~\cite{Cyr2020} and LSGD methods for 
DeepONets~\cite{Choi2025}. 
In \cref{sec3}, we formulate the minimization problem from the general type of squared $L^2$ 
loss in terms of the last layer parameters of branch networks. 
After that, we will explain how this problem can be understood as an LS system of the last 
layer parameters and present the corresponding LSGD method for MIONets. 
In \cref{sec4}, we conduct experiments on supervised learning for a nonlinear PDE 
and unsupervised learning for linear PDEs 
to compare the training performance between conventional MIONet training with Adam and 
MIONet training with ALS+Adam. 

\section{Preliminaries}\label{sec2}

In this section, we introduce MIONet with its universal approximation 
property~\cite{Jin2022} and the hybrid LSGD method for DeepONets~\cite{Choi2025}. 
Refer to~\cref{Tab:Notation} for the meaning of symbols and variables 
used in this paper. 

\begin{table}
    \caption{\textit{Notation Table.}}
    \centering
    {\footnotesize
    \begin{tabular}{c c c} 
        \toprule
        {Notation} & {Space} & {Description}\\\midrule
        {$N$} & {$\mathbb{N}$} & {Number of branch networks}\\
        {$I$} & {$\mathbb{N}$} & {Number of output nodes of branch and trunk networks}\\
        {$J_m$} & {$\mathbb{N}$} & {Number of input nodes of output layer of $m$-th branch}\\
        {$M_m$} & {$\mathbb{N}$} & {Number of discretization points for $m$-th branch input}\\
        {$d_0$} & {$\mathbb{N}$} & {Number of dimension for output function coordinate}\\\midrule
        {$\mathbf{u}^{(m)}$} & {$\mathbb{R}^{M_m}$} & {Discretized input function for $m$-th branch}\\
        {$y$} & {$\mathbb{R}^{d_0}$} & {Coordinate for output function}\\
        {$\mathbf{b}_{m}(\mathbf{u}^{(m)})$} & {$\mathbb{R}^I$} & 
        {Output of $m$-th branch network of vanilla MIONet}\\
        {$\mathbf{t}(y)$} & {$\mathbb{R}^I$} & 
        {Output of trunk network of vanilla MIONet}\\
        {$\tilde{\mathbf{b}}_{m}(\mathbf{u}^{(m)})$} & {$\mathbb{R}^{J_m}$} & 
        {Output of layer before last layer of $m$-th branch}\\
        {$C_m$} & {$\mathbb{R}^{I\times J_m}$} & {Last layer parameter matrix of $m$-th branch}\\\midrule
        {$\theta^{B}_m$} & {-} & {Hidden layer parameters of $m$-th branch}\\
        {$\theta^T$} & {-} & {Parameters of trunk network}\\
        {$\theta^{L}_m$} & {$\mathbb{R}^{I J_m}$} & 
        {Last layer parameters of $m$-th branch, $\theta^{L} = \text{vec}(C_{m}^T)$}\\\midrule
        {{${[d_1,\dots,d_N]}_{D_1,\dots,D_N}$}} & 
        {{$\mathbb{N}$}} & 
        {\begin{tabular}{c}Big-endian order for the entries of \\
            the rank $N$ tensor in \cref{eq:LexIndDef}
        \end{tabular}} 
        \\ \midrule
        {$K$} & {$\mathbb{N}$} & {Number of loss terms except regularization terms}\\
        {$\epsilon_k$} & {$\mathbb{R}^{>0}$} & {Weight for $k$-th loss term}\\
        {$\lambda_m$} & {$\mathbb{R}^{>0}$} & {Weight for $L^2$ regularization term of $\theta^{L}_m$}\\
        {$\mathcal{L}_k$} & {$\mathcal{L}(C(\mathbb{R}),C(\mathbb{R}))$} & 
        {Linear operator for $k$-th loss term}\\
        {$D_{k}$} & {$\mathbb{N}$} & {Number of data pairs for $k$-th loss term}\\
        {$\chi_k$} & {${\left((\prod_{m=1}^{N} \mathbb{R}^{M_m}) \times \mathbb{R}^{d_0}\right)}^{D_{k}}$} & 
        {Data $(\mathbf{u}^{(1)},\dots,\mathbf{u}^{(N)},y)$ collection for $k$-th loss term}\\
        {$\mathcal{A}_k$} & {$\mathbb{R}^{D_{k}}$} & 
        {Long vector whose $d_k$-th entry is \cref{eq:EntryMION}}\\
        {$A_{k,m}$} & {$\mathbb{R}^{D_{k} \times I J_m}$} & 
        {System matrix for $k$-th loss term of LS problem in $\theta^{L}_m$}\\
        {$f_k$} & {$\mathbb{R}^{D_{k}}$} & 
        {Data for $k$-th loss term of LS problem}\\\midrule
        {$P_m$} & {$\mathbb{N}$} & {Number of input functions for $m$-th branch}\\
        {$Q_k$} & {$\mathbb{N}$} & {Number of coordinate points of $k$-th loss term}\\
        {$\beta_m$} & {${(\mathbb{R}^{M_m})}^{P_m}$} & {Set of discretized input functions for $m$-th branch}\\
        {$\tau_k$} & {${(\mathbb{R}^{d_0})}^{Q_k}$} & 
        {Set of coordinate points of $k$-th loss term}\\\midrule
        {$K_{D,\sigma}$} & {$\mathbb{R}^{{D_1} \cdots {D_m} \times {D_1} \cdots {D_m}}$} & 
        {Tensor permutation matrix in \cref{def:TensorPermMat}}\\
        {$B_m$} & {$\mathbb{R}^{P_m \times J_m}$} & {$m$-th branch pre-output matrix, 
        $\left(\tilde{b}_j (\mathbf{u}^{(m)}_p)\right)$}\\
        {$T_k$} & {$\mathbb{R}^{Q_k \times I}$} & 
        {$k$-th trunk output matrix with $\mathcal{L}_k$, 
        $\left(\mathcal{L}_k[t_i](y_{q})\right)$}\\
        {$F_k$} & {$\mathbb{R}^{P_1 \times \cdots \times P_N \times Q_k}$} & 
        {Data for $k$-th loss term in tensor form}\\
        \bottomrule 
    \end{tabular}~\label{Tab:Notation}}
\end{table}

\subsection{Multiple-Input Operator Network~(MIONet)}\label{sec21}

Jin et al.~\cite{Jin2022} proved an UAT for multiple-input 
operators on the product of Banach spaces of functions with Schauder bases. 
We refer to~\cite{Fabian2011,Hu2025,Semadeni2006} for more details of the 
Schauder basis and its canonical projections. 
The theorem states that a continuous multiple-input operator $G$ on the 
product of compact subsets of Banach spaces $X_m$ with Schauder bases can be approximated 
by the form
\begin{equation}~\label{eq:UATMION} 
    \Big\langle 
    \underbrace{(\mathbf{b}_1\circ \phi^1_{M_1})}_{\textrm{branch $1$}}
    \odot \cdots \odot 
    \underbrace{(\mathbf{b}_N \circ \phi^N_{M_N})}_{\textrm{branch $N$}}, 
    \underbrace{\mathbf{t}}_{\textrm{trunk}}\Big\rangle
\end{equation}
with continuous vector functions 
$\mathbf{b}_m\in C(\mathbb{R}^{M_m},\mathbb{R}^{I})$ and $\mathbf{t}\in Y^{I}$ 
for sufficiently large positive integers $I$ and $M_m$, 
where $\phi^m_{M_m}\colon X_m\to\mathbb{R}^{M_m}$ extracts the first $M_m$ coefficients of the 
Schauder basis representation, $Y$ is the target Banach space, and $\odot$ denotes the Hadamard 
(entrywise) product.

Here, from the inner product structure~\eqref{eq:UATMION} of vector functions $\mathbf{b}_m$ 
and $\mathbf{t}$, a multiple-input operator network (MIONet) can be constructed by replacing 
those functions with neural networks. 
In accordance with the UAT for DeepONet~\cite{Lu2021}, 
we want to use discretized function values of $u^{(m)}$ as an input for $\mathbf{b}_m$, 
where $\phi_n^m$ extracts function values at certain points. 
Note that for the sequence of distinct points ${\{t_i\}}_{i=1}^{\infty}$ in $[0,1]$ with 
$t_1=0$ and $t_2=1$, which is dense in $[0,1]$, there exists a Schauder basis 
${\{e_i\}}_{i=1}^{\infty}$ of $C({[0,1]})$ (called the Faber-Schauder basis) 
where $e_1(t) = 1$ and $e_{n}$ is chosen as a piecewise linear function with 
$e_{n}(t_{n})=1$, where the set $\{e_1,\dots,e_{n}\}$ 
forms a basis for the space of all piecewise linear functions with node points 
${\{t_i\}}_{i=1}^{n}$. 
We refer to~\cite{Fabian2011,Semadeni2006} for detailed Faber-Schauder basis 
construction of $C({[0,1]})$ and~\cite{Glenn2019} for the multivariate extension to 
$C({[0,1]}^d)$. 
Furthermore, without loss of generality, MIONet can have a vanilla structure 
similar to DeepONets in~\cite{Choi2025,Son2025}, where 
the last layer of each branch network is a fully connected layer without 
bias and activation function. 
This can be achieved by adding an identity layer to the output of each branch network. 
See \cref{fig:21} for the structure of vanilla MIONet. 
Note that MIONet with $N=1$ becomes DeepONet with the same structure and the universal 
approximation property discussed in~\cite{Choi2025}. 

\begin{figure}[tb] 
    \centering
    \includegraphics[width=0.65\linewidth]{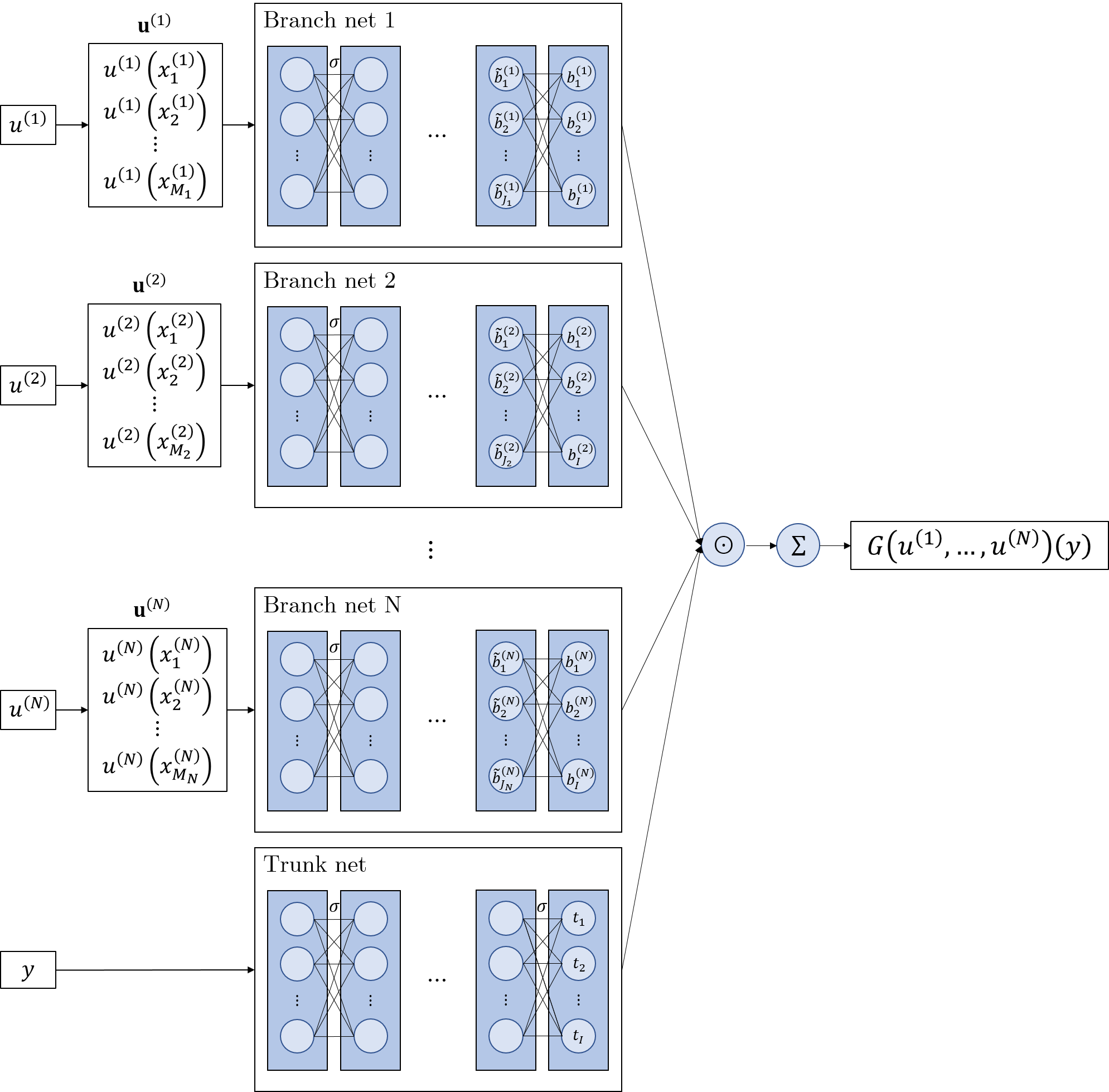}
    \caption{Structure of vanilla MIONet with fully connected layers.}\label{fig:21}
\end{figure}

We now address the UAT for vanilla MIONets, where the input 
functions are discretized function values. This theorem is a modification 
of~\cite[Theorem 2]{Hu2025} and~\cite[Theorem 2]{Lu2021}.

\begin{theorem}[Universal Approximation Theorem for vanilla MIONets]~\label{thm:UATVMION}
    Let $\tilde{X}_m$ be Banach spaces, $\tilde{K}_m\subset\tilde{X}_m$,
    $K_0\subset\mathbb{R}^{d_0}$ be compact subsets of $\tilde{X}_m$ and $\mathbb{R}^{d_0}$, 
    respectively, $V_m$ be a compact set in $C(\tilde{K}_m)$, and $G$ be a continuous operator 
    which maps $V_1\times\cdots\times V_N$ into $C(K_0)$. 
    Then, for any $\epsilon >0$, there exist positive integers $I,M_m$, continuous 
    vector functions $\mathbf{b}_m\in C(\mathbb{R}^{M_m},\mathbb{R}^{I})$ and 
    $\mathbf{t}\in C(\mathbb{R}^{d_0},\mathbb{R}^{I})$, and points 
    $x^{(m)}_{i} \in \tilde{K}_m$ with $m=1,\dots,N$ and $i=1,\dots,M_m$ such that
    \begin{equation}~\label{eq:UATVMION} 
        {\Biggl| G(u^{(1)},\dots,u^{(N)})(y) - 
        \Big\langle 
        \underbrace{\mathbf{b}_1(\mathbf{u}^{(1)})}_{\textrm{branch $1$}}
        \odot \cdots \odot 
        \underbrace{\mathbf{b}_N(\mathbf{u}^{(N)})}_{\textrm{branch $N$}}, 
        \underbrace{\mathbf{t}(y)}_{\textrm{trunk}}\Big\rangle\Biggr|} 
        < \epsilon
    \end{equation}
    holds for all $u^{(m)}\in V_m$ and $y\in K_0$,
    where $\langle \cdot,\cdot \rangle$ denotes the inner product in $\mathbb{R}^{I}$ and
    $\mathbf{u}^{(m)}={\left[u^{(m)}(x_{1}^{(m)}),\dots, u^{(m)}(x_{M_m}^{(m)})\right]}^{T}
    \in{\mathbb{R}}^{M_m}$ is a discretized input function $u^{(m)}$ at 
    ${\{{x_{i}^{(m)}}\}}_{i=1}^{M_m}$.
    Moreover, the functions $\mathbf{b}_m$ and $\mathbf{t}$ can be chosen as diverse classes 
    of neural networks satisfying the classical UAT of functions,
    where the branch networks$\mathbf{b}_m (\mathbf{u}^{(m)}) = C_m \circ 
    \tilde{\mathbf{b}}_m (\mathbf{u}^{(m)})$ are decomposed 
    into a hidden layer part $\tilde{\mathbf{b}}_m\colon \mathbb{R}^{M_m} \to \mathbb{R}^{J_m}$ 
    and a linear last layer part $C_m\colon \mathbb{R}^{J_m} \to \mathbb{R}^{I}$ with a matrix 
    parametrization $C_m = (c_{i j}^{(m)})\in \mathbb{R}^{I\times J_m}$.
\end{theorem}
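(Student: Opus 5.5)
My plan is to reduce the statement to the universal approximation theorem for MIONets on products of Banach spaces with Schauder bases, \cite[Theorem 3.1]{Jin2022}, which gives the form \eqref{eq:UATMION}. The inputs will be point values, so the Banach space I use for each branch is $X_m = C(\tilde{K}_m)$ itself. The one remaining difficulty is that \eqref{eq:UATMION} is phrased through Schauder projections $\phi^m_{M_m}$, while \eqref{eq:UATVMION} uses the raw samples $\mathbf{u}^{(m)}$. I will therefore build, for each $m$, a finite-dimensional linear reconstruction from point values that plays the role of the Schauder projection. The generalized Faber--Schauder construction works when $\tilde{K}_m$ is a cube, but for a general compact subset of a Banach space I do not expect a Schauder basis in which the coefficients are point values.

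\textbf{Step 1 (sampling reconstruction).} Following the proof of \cite[Theorem 2]{Lu2021}, I use compactness of $V_m$ in $C(\tilde{K}_m)$ and the Arzel\`a--Ascoli theorem to get a uniform modulus of continuity $\omega_m$ for $V_m$. Given $\delta>0$, I choose a finite $\delta$-net $\{x^{(m)}_i\}_{i=1}^{M_m}$ of $\tilde{K}_m$ and a continuous partition of unity $\{\psi_i\}$ subordinate to the $\delta$-balls around these points. I then define the linear map $\Pi_m\colon\mathbb{R}^{M_m}\to C(\tilde{K}_m)$ by $\Pi_m(\mathbf{v})=\sum_i v_i\psi_i$. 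Then $\|u-\Pi_m(\mathbf{u})\|_\infty\le\omega_m(2\delta)$ uniformly over $u\in V_m$, and the set $V_m^\delta=\{\Pi_m(\mathbf{u}):u\in V_m\}$ is compact. The union $W_m=V_m\cup\bigcup_{\delta\le 1}V_m^{\delta}$ should also be compact, or I work with its closure. I will check this as in \cite{Lu2021}, but it may be the step requiring the most care.

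\textbf{Step 2 (extension and continuity).} I extend $G$ continuously from $V_1\times\cdots\times V_N$ to $W_1\times\cdots\times W_N$ using the Dugundji extension theorem, which applies to maps from a closed subset of a metric space into the locally convex space $C(K_0)$. I obtain uniform continuity of the extended $G$ on a compact set.

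\textbf{Step 3 (approximation and error split).} For $\mathbf{v}_m$ in the compact set $\{\mathbf{u}^{(m)}:u\in V_m\}\subset\mathbb{R}^{M_m}$, I consider the map $(\mathbf{v}_1,\dots,\mathbf{v}_N,y)\mapsto G(\Pi_1\mathbf{v}_1,\dots,\Pi_N\mathbf{v}_N)(y)$. This map is continuous on a compact subset of $\mathbb{R}^{M_1}\times\cdots\times\mathbb{R}^{M_N}\times\mathbb{R}^{d_0}$.

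A continuous function on a product of compact sets is approximable by finite sums of products $\sum_{i=1}^I b_{1,i}(\mathbf{v}_1)\cdots b_{N,i}(\mathbf{v}_N)t_i(y)$. This follows from the Stone--Weierstrass theorem, since these sums form a subalgebra containing constants that separates points. Such a sum is exactly $\langle\mathbf{b}_1\odot\cdots\odot\mathbf{b}_N,\mathbf{t}\rangle$. The total error splits by the triangle inequality into the reconstruction error, controlled by uniform continuity of $G$ together with Step 1, plus the Stone--Weierstrass error, each $<\epsilon/2$.

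\textbf{Step 4 (networks and vanilla structure).} I replace each $b_{m,i}$ and $t_i$ by neural networks using the classical UAT, bounding the perturbation with a telescoping product estimate on the compact ranges. To obtain the vanilla form $C_m\circ\tilde{\mathbf{b}}_m$, I append a linear identity layer, so that $J_m=I$, $C_m=\mathrm{Id}$, and $\tilde{\mathbf{b}}_m$ is the network itself, as remarked before the theorem.
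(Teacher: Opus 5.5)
The paper does not prove this theorem. It only says it is a modification of \cite[Theorem 2]{Hu2025} and \cite[Theorem 2]{Lu2021}. The discussion before it points to the Schauder-basis route: apply \cite[Theorem 3.1]{Jin2022} with the Faber--Schauder basis. There, the first $M$ coefficient functionals are an invertible triangular linear transform of $(u(t_1),\dots,u(t_M))$, so $\mathbf{b}_m\circ\phi^m_{M_m}$ is already a continuous function of the raw samples. That reduction is short, but, as you observe, it only handles $\tilde K_m$ an interval or a cube.

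Your argument is correct and follows the other reference the paper cites. It is the Chen--Chen/Lu et al.\ proof (partition-of-unity reconstruction from samples, plus an extension of $G$) carried over to several inputs. Stone--Weierstrass on $S_1\times\cdots\times S_N\times K_0$, with $S_m=\{\mathbf{u}^{(m)}:u\in V_m\}$, then produces the $\langle\mathbf{b}_1\odot\cdots\odot\mathbf{b}_N,\mathbf{t}\rangle$ structure. Note that you never actually invoke \cite[Theorem 3.1]{Jin2022}; Step 3 replaces it. This makes your proof self-contained for arbitrary compact $\tilde K_m$, which the Faber--Schauder discussion in the paper does not cover.

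The one step to tighten is the one you flagged. Suppose $\delta$ ranges over all of $(0,1]$, with a net and partition of unity chosen separately for each $\delta$. Then $W_m$ need not be totally bounded, so taking the closure does not help. For example, take $\tilde K_m=[0,1]$, $V_m=\{x\mapsto x\}$ and the net $\{0,1\}$, which is valid for $\delta>1/2$. The reconstruction is $\psi_1$, which must rise from $0$ to $1$ on $[1-\delta,\delta]$, and these functions are not equicontinuous as $\delta\downarrow 1/2$.

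There are two easy fixes. First, you can fix a sequence $\delta_n\to 0$ with fixed nets and partitions. Then $V_m\cup\bigcup_n V_m^{\delta_n}$ is compact, because $\|\Pi_m^{\delta_n}(\mathbf{u})-u\|_\infty\le\omega_m(2\delta_n)\to 0$ uniformly on $V_m$. Second, you can dispense with $W_m$ entirely. Extend $G$ by Dugundji to all of $C(\tilde K_1)\times\cdots\times C(\tilde K_N)$, and use that a continuous map is uniformly continuous around the compact set $V_1\times\cdots\times V_N$. That is, for every $\eta>0$ there is $\rho>0$ with $\|G(w)-G(v)\|_{C(K_0)}<\eta$ whenever $v\in V_1\times\cdots\times V_N$ and $\max_m\|w_m-v_m\|_\infty<\rho$.

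Finally, for the first part of the statement, the Stone--Weierstrass factors must be defined on all of $\mathbb{R}^{M_m}$ and $\mathbb{R}^{d_0}$. Extend them from $S_m$ and $K_0$ by Tietze, or simply take them to be monomials.
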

The MIONet approximating $G$ in~\eqref{eq:UATVMION} can be expressed as a 
nested form of summations and products:
\begin{equation}\label{eq:MION}
    \begin{alignedat}{2} 
        G(u^{(1)},\dots,u^{(N)})(y) 
        &\approx \left\langle\mathbf{b}_{1}(\mathbf{u}^{(1)})\odot\cdots\odot\mathbf{b}_{N}
        (\mathbf{u}^{(N)}),\mathbf{t}(y)\right\rangle\\
        &= \sum_{i=1}^{I} \left(\prod_{m=1}^{N}b_{i}^{(m)}(\mathbf{u}^{(m)})\right) t_i(y)\\
        &= \sum_{i=1}^{I}\left[\prod_{m=1}^{N}\left(\sum_{j=1}^{J_m} c_{i j}^{(m)}
        \tilde{b}_{j}^{(m)}(\mathbf{u}^{(m)})\right)\right]t_i(y),
    \end{alignedat}
\end{equation}
where $\tilde{b}_{j}^{(m)}$ is the $j$-th component of the vector function
$\tilde{\mathbf{b}}_m$, and $b_i^{(m)}$ and $t_i$ are the $i$-th component of the vector 
function ${\mathbf{b}}_m$ and ${\mathbf{t}}_m$, respectively.

\subsection{Hybrid Least Squares/Gradient Descent Method for DeepONets}\label{sec22}

Cyr et al.~\cite{Cyr2020} suggested the hybrid least squares/gradient descent~(LSGD) method to 
improve deep neural network training. This method alternates between least squares~(LS) 
steps and gradient descent~(GD) steps. 
The LS steps solve an LS problem for the linear last layer parameters to optimize 
those parameters, and the GD steps optimize the hidden layer parameters with a gradient 
descent type optimizer. 

We have proposed the hybrid LSGD method for DeepONets in~\cite{Choi2025}, where we accelerate 
and improve training for vanilla DeepONets in both supervised and unsupervised learning. 
As a starting point for the LSGD method for MIONets, we briefly explain the LSGD method for 
DeepONets in this section. 

Consider a vanilla DeepONet, which is the vanilla MIONet in~\eqref{eq:MION} with $N=1$.
We use a general type of $l_2$ loss, which consists of the sum of squared $l_2$ error terms 
and a regularization term for the last layer parameters of the branch network: 
\begin{equation}~\label{eq:GenL2LSDON}
    \sum_{k=1}^{K} \epsilon_{k} {\left\|\mathcal{L}_k\left[G(\cdot)\right](u,y) -
    \mathcal{L}_k\left[\langle \mathbf{b}(\cdot;\theta^{B},\theta^{L}),
    \mathbf{t}(\cdot;\theta^{T})\rangle\right](\mathbf{u},y)\right\|}_{l_2(\chi_k)}^{2}
    + \lambda {\left\|\theta^{L}\right\|}_{2}^{2},
\end{equation}
where $\chi_k={\{(\hat{u}_{d_k},\hat{y}_{d_k})\}}_{d_k=1}^{D_k}$ is a finite collection of 
data pairs $(u,y)$, $\mathcal{L}_k$ is a linear operator between real-valued functions, 
$f_k=\mathcal{L}_k\left[G(\cdot)\right](\hat{u}_{d_k},\hat{y}_{d_k})\in\mathbb{R}^{D_k}$
is the given data for the $k$-th error term,
$\epsilon_k > 0$ and $\lambda\geq 0$ are the weights for each error term and the 
regularization term, and $\theta^{B}$, $\theta^{T}$, $\theta^{L}$ 
denote the parameters for the branch network except the last layer, the trunk network, 
and the last layer of the branch, respectively. 

We represent $\theta^{L}$ as the row-wise vectorization of the (branch) last layer 
parameter matrix $C\in\mathbb{R}^{I\times J}$:
\begin{equation*}~\label{eq:vecCDON}
    \theta^{L}=\text{vec}({C^{T}})=
    {[c_{11},\dots,c_{1J},c_{21},\dots,c_{2J},\dots,c_{I1},\dots,c_{IJ}]}^{T}
    \in{\mathbb{R}}^{IJ}.
\end{equation*}
Then,~\eqref{eq:GenL2LSDON} can be reformulated as an LS problem
\begin{equation}~\label{eq:LLLSDON} 
    \min_{\theta^{L}} {\sum_{k=1}^{K} \epsilon_{k} {\left\|f_k - A_k \theta^{L}\right\|}_{2}^{2}
    + \lambda {\left\|{\theta}^L\right\|}_{2}^{2}},
\end{equation}
where $A_k \in{\mathbb{R}}^{D_k\times IJ}$ is the matrix whose $(d_k,J(i-1)+j)$ entry is 
$\mathcal{L}_k\left[\tilde{b}_j t_i\right](\hat{\mathbf{u}}_{d_k},\hat{y}_{d_k})$.

Under additional conditions for the data $\chi_k$ and the linear operator $\mathcal{L}_k$ 
such that   
\begin{subequations}~\label{eq:CondDON} 
    \begin{align}
        \chi_k = \beta \times \tau_k &= {\left\{u_p\right\}}_{p=1}^{P} \times 
        {\left\{y_{q_k}\right\}}_{q_k=1}^{Q_k},\\
        \mathcal{L}_k\left[\tilde{b}_j t_i\right](\mathbf{u},y) 
        &= \tilde{b}_j(\mathbf{u}) \mathcal{L}_k\left[t_i\right](y),
    \end{align}
\end{subequations}
we have a main theorem for the LS step of the LSGD method for 
DeepONets~\cite[Theorem~3]{Choi2025}. 
In the theorem, the large matrix $A_k$ in \cref{eq:LLLSDON} can be factored into two small 
component matrices, each from the branch and trunk networks, using the Kronecker 
product, where a commutation matrix~\cite{MacRae1974,Magnus1979} is introduced to permute the 
row order of the product matrix.  

Then, the LS problem can be reformulated to a matrix equation, which is a special case of the 
generalized Sylvester equation of type $A X B + \lambda X = E$ with symmetric positive 
semi-definite matrices $A$ and $B$. Such matrix equation can be solved by using the spectral 
decompositions~\cite[Proposition 6]{Choi2025}. 
 
\begin{proposition}~\label{prop:SPSDSyl}
    Let $A\in\mathbb{R}^{R\times R}$ and $B\in\mathbb{R}^{S\times S}$ be symmetric positive
    semi-definite matrices,
    $E\in\mathbb{R}^{R\times S}$ be any matrix, and $\lambda$ be a nonnegative real number.
    Then, the solution of the matrix equation 
    \begin{equation}~\label{eq:SylEq}
        AXB+\lambda X = E
    \end{equation}
    is given as
    \begin{equation}~\label{eq:SylSol}
        X = Q_A \left[{(\mathbf{d}_A\mathbf{d}_B^{T}+\lambda \mathbf{1}_{R\times S})
        }^{\odot -1} \odot (Q_A^{T} E Q_B)\right] Q_B^{T},
    \end{equation}
    where $A = Q_A D_A Q_A^{T}$ and $B = Q_B D_B Q_B^{T}$ are the spectral
    decompositions with orthogonal matrices $Q_A$, $Q_B$ and diagonal
    matrices $D_A = \text{diag}(\mathbf{d}_A)$, $D_B = \text{diag}(\mathbf{d}_B)$
    when $\mathbf{d}_A\in\mathbb{R}^{R}$, $\mathbf{d}_B\in\mathbb{R}^{S}$. 
    Here, ${\odot^{-1}}$ denotes the entrywise inverse, and $\mathbf{1}_{R\times S}$ represents 
    the $R\times S$ matrix with all entries equal to one.
\end{proposition}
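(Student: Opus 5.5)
The approach is to diagonalize both coefficient matrices and reduce \cref{eq:SylEq} to a collection of scalar equations, one per entry. Write $A = Q_A D_A Q_A^T$ and $B = Q_B D_B Q_B^T$ as in the statement, with $Q_A$, $Q_B$ orthogonal and $\mathbf{d}_A,\mathbf{d}_B$ entrywise nonnegative by positive semi-definiteness. Change variables to $Y = Q_A^T X Q_B$ and set $\tilde E = Q_A^T E Q_B$. Multiplying \cref{eq:SylEq} on the left by $Q_A^T$ and on the right by $Q_B$, and using $Q_A^TQ_A = I$, $Q_B^TQ_B=I$, gives the equivalent equation
\begin{equation*}
D_A Y D_B + \lambda Y = \tilde E .
\end{equation*}
This is equivalent because the change of variables $X\mapsto Q_A^TXQ_B$ is a bijection with inverse $Y \mapsto Q_A Y Q_B^T$.

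Since $D_A$ and $D_B$ are diagonal, the $(r,s)$ entry of $D_A Y D_B$ is $(\mathbf{d}_A)_r (\mathbf{d}_B)_s Y_{rs}$. Hence $D_A Y D_B = (\mathbf{d}_A\mathbf{d}_B^T)\odot Y$, and the transformed equation reads
\begin{equation*}
(\mathbf{d}_A\mathbf{d}_B^T + \lambda \mathbf{1}_{R\times S})\odot Y = \tilde E,
\end{equation*}
i.e.\ $R S$ decoupled scalar equations $\bigl((\mathbf{d}_A)_r(\mathbf{d}_B)_s+\lambda\bigr) Y_{rs} = \tilde E_{rs}$. Solving entrywise gives $Y = (\mathbf{d}_A\mathbf{d}_B^T+\lambda\mathbf{1}_{R\times S})^{\odot -1}\odot \tilde E$. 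Transforming back by $X = Q_A Y Q_B^T$ yields exactly \cref{eq:SylSol}. Uniqueness follows from the bijectivity of the change of variables and of each scalar equation.

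The only delicate point is the invertibility of the scalars $(\mathbf{d}_A)_r(\mathbf{d}_B)_s+\lambda$. These are nonnegative, being products of nonnegative eigenvalues plus $\lambda\ge 0$. They are strictly positive whenever $\lambda>0$, so in that case the formula is well defined and gives the unique solution. If $\lambda = 0$ and some product of eigenvalues vanishes, the entrywise inverse is undefined. I would therefore note that the statement should be read with $\lambda>0$, or with $A$ and $B$ positive definite when $\lambda=0$; this matches its use with the regularization weights $\lambda_m>0$. If one wants a statement covering the degenerate case, I would interpret $\odot^{-1}$ as a pseudo-inverse (zero entries mapped to zero). This gives the minimum-norm least-squares solution, which is consistent because the vectorized system is $(B\otimes A+\lambda I)\operatorname{vec}(X)=\operatorname{vec}(E)$ with symmetric coefficient matrix diagonalized by $Q_B\otimes Q_A$.
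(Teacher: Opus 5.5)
Your proof is correct and is the standard diagonalization argument: conjugate by $Q_A^{T}$ and $Q_B$, use $D_A Y D_B = (\mathbf{d}_A\mathbf{d}_B^{T})\odot Y$, solve the $RS$ decoupled scalar equations, and transform back. The paper does not reprove this result but imports it from \cite[Proposition 6]{Choi2025}, and your caveat that the entrywise inverse needs $\lambda>0$ (or positive definite $A$, $B$ when $\lambda=0$) is a legitimate sharpening of the statement, consistent with the paper's later remark that uniqueness holds when $\lambda_{\pi(m)}>0$.
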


Therefore, the last layer parameters minimizing~\eqref{eq:LLLSDON} can be found 
efficiently, which is now the LS step of the LSGD method for DeepONets. 
The remaining hidden parameters can be found by using a GD-type optimizer. 
See~\cite[Algorithms 1 and 2]{Choi2025} for the workflow of the LSGD method for DeepONets and 
its practical modification with Adam optimizer~\cite{Kingma2017}, LS+Adam, respectively.

\section{Hybrid Least Squares/Gradient Descent Methods for MIONets}\label{sec3}

We consider the following loss of the vanilla MIONet, which consists of the sums of squared
$l_2$ error terms indexed by $k$ and regularization terms for the $N$ last layer parameters: 
\begin{equation}~\label{eq:GenL2LSMION}
    \begin{alignedat}{2}
        \sum_{k=1}^{K} \epsilon_{k} &\Big\|\mathcal{L}_k\left[G(\cdot,\dots,\cdot)\right]
        (u^{(1)},\dots,u^{(N)},y)\\
        &-\mathcal{L}_k\left[\langle\mathbf{b}_{1}(\cdot;\theta_{1}^{B},\theta_{1}^{L})
        \odot\cdots\odot\mathbf{b}_{N}(\cdot;\theta_{N}^{B},\theta_{N}^{L}),
        \mathbf{t}(\cdot;\theta^{T})\rangle\right](\mathbf{u}^{(1)},\dots,\mathbf{u}^{(N)},y)
        \Big\|_{l_2(\chi_k)}^{2}\\
        &+\sum_{m=1}^{N}\lambda_m {\left\|\theta_{m}^{L}\right\|}_{2}^{2},
    \end{alignedat}
\end{equation} 
where $\chi_k$ is a finite collection of data pairs 
${\left\{(\hat{u}_{d_k}^{(1)},\dots,\hat{u}_{d_k}^{(N)},\hat{y}_{d_k})\right\}}_{d_k=1}^{D_k}$, 
$\mathcal{L}_k$ is a linear operator between real-valued functions for the $k$-th error term,
$f_k$ is the data for the $k$-th error term given as
$f_k = \left({\mathcal{L}_k\left[G(\cdot,\dots,\cdot)\right](\hat{u}^{(1)}_{d_k},\dots,
\hat{u}^{(N)}_{d_k},\hat{y}_{d_k})}\right) \in{\mathbb{R}}^{D_k}$, 
$\epsilon_k > 0$ and $\lambda_m\geq 0$ are weights for error terms and the regularization 
terms, and $\theta_{m}^{B}$, $\theta^{T}$, $\theta_{m}^{L}$ 
denote the parameters for the $m$-th branch network except the last layer, the trunk network, 
and the last layer of the $m$-th branch, respectively, for $m=1,\dots,N$. 

For the convenience of notation, we use the big-endian lexicographic order 
defined below for the entries of the tensor. 
\begin{definition}\label{def:LexIndDef}
    Let $\mathcal{T}\in{\mathbb{R}}^{D_1\times\cdots\times D_N}$ be a rank $N$ tensor of size 
    $D_1\times\cdots\times D_N$. 
    The lexicographic order of the entry of $\mathcal{T}$ maps the coordinate index 
    $(d_1,\dots,d_N)$ to the corresponding big-endian lexicographic order 
    ${[d_1,\dots,d_N]}_{D_1,\dots,D_N}$ such that  
    \begin{equation}\label{eq:LexIndDef}
        {[d_1,\dots,d_N]}_{D_1,\dots,D_N} \coloneq \sum_{m=1}^{N-1} \left[ 
            \left(\prod_{l=m+1}^{N} D_{l}\right) (d_{m} - 1)
        \right] + d_N.
    \end{equation}
\end{definition}

Note that the above definition gives the row-wise lexicographic order of matrix 
entries. 

Let $C_{m}\in\mathbb{R}^{I\times J_m}$ be a parameter matrix of the $m$-th last layer, 
where $\theta_{m}^{L}$ is the row-wise vectorization of $C_{m}$. 
The entries of $C_{m}$ are sorted lexicographically in the order 
\cref{eq:LexIndDef}, i.e., 
\begin{equation*}~\label{eq:vecCMION}
    \theta_{m}^{L}=\text{vec}(C_{m}^{T})=
    {[c_{11}^{(m)},\dots,c_{1J_m}^{(m)},c_{21}^{(m)},\dots,c_{2J_m}^{(m)},
    \dots,c_{I1}^{(m)},\dots,c_{IJ_m}^{(m)}]}^{T}
    \in{\mathbb{R}}^{IJ_m}.
\end{equation*}

Here,~\eqref{eq:GenL2LSMION} 
is reformulated as the following minimization problem in terms of the $N$ last layer 
parameters $\{\theta_{1}^{L},\dots,\theta_{N}^{L}\}$: 
\begin{equation}\label{eq:LLALSMION}
    \min_{\{{\theta}_{1}^{L},\dots,{\theta}_{N}^{L}\}}
    \sum_{k=1}^{K} \epsilon_{k} {\left\|f_k - \mathcal{A}_k(\theta_{1}^{L},\dots,\theta_{N}^{L})
    \right\|}_{l_2(\chi_k)}^{2}
    + \sum_{m=1}^{N} \lambda_m {\left\|{\theta}_{m}^{L}\right\|}_{2}^{2},
\end{equation}
where
$\mathcal{A}_k(\theta_{1}^{L},\dots,\theta_{N}^{L}) \in {\mathbb{R}}^{D_k}$ is the long vector
whose $d_k$ entry is 
\begin{equation}\label{eq:EntryMION}
    \mathcal{L}_k\left[\sum_{i=1}^{I} b_{i}^{(1)} \cdots b_{i}^{(N)}t_i\right]
    (\hat{\mathbf{u}}_{d_k}^{(1)},\dots,\hat{\mathbf{u}}_{d_k}^{(N)},\hat{y}_{d_k}),
\end{equation}
and it is the result of an $N$-linear function applied to the last layer parameters
$\theta_{1}^{L},\dots,\theta_{N}^{L}$.

Although the minimization problem~\eqref{eq:LLALSMION} is not an LS problem, we can form an LS 
problem for $\theta_{n}^{L}$ by fixing all last layer parameters except the $n$-th parameter 
$\theta_n^L$. 
By solving the LS problems generated for each last layer parameter in turn, 
we can optimize the $N$ last layer parameters. The details will be discussed after formulating 
the LS problem for the single last layer parameter. 

Note that $\mathcal{A}_k(\theta_{1}^{L},\dots,\theta_{N}^{L})$ is linear with respect to 
$\theta_{n}^{L}$ for each $n$, we can write 
\begin{equation*}~\label{eq:MIONA}
    \mathcal{A}_k(\theta_{1}^{L},\dots,\theta_{N}^{L}) = A_{k,n} \theta_{n}^{L}
\end{equation*} 
for each $k$ and $n$.

Therefore, the LS step minimizing~\eqref{eq:LLALSMION} in terms
of $\theta_{n}^{L}$ is 
\begin{equation}~\label{eq:LLLSMION}
    \min_{{\theta}_{n}^{L}}
    \sum_{k=1}^{K} \epsilon_{k} {\left\|f_{k} - A_{k,n} \theta_{n}^{L}\right\|}_{2}^{2}
    + \lambda_n {\left\|{\theta}_{n}^{L}\right\|}_{2}^{2}
\end{equation}
where $A_{k,n}\in{\mathbb{R}}^{D_k\times IJ_n}$ is the matrix whose $(d_k,{[i,j]}_{I,J_n})$
entry is 
\begin{equation*}~\label{eq:MIONFactEntry}
    \mathcal{L}_k\left[b_{i}^{(1)}\cdots b_{i}^{(n-1)} \tilde{b}_{j}^{(n)} b_{i}^{(n+1)}
    \cdots b_{i}^{(N)}t_i\right](\hat{\mathbf{u}}_{d_k}^{(1)},\dots,\hat{\mathbf{u}}_{d_k}^{(N)},
    \hat{y}_{d_k}).
\end{equation*}

Here, the LS system~\eqref{eq:LLLSMION} cannot be handled 
directly if $I$ and $J_n$ are not small unless $D_k$ is small, but it is not desirable to 
use small $D_k$ in MIONet training. We provide the data and operator conditions for MIONet, 
which correspond to the conditions for DeepONet in~\cite[Eqs.~(3.4) and (3.5)]{Choi2025}.

Suppose that the data collection $\chi_k$ can be expressed as a Cartesian product of 
$N+1$ input data as follows: 
\begin{equation}~\label{eq:CondDataMION} 
    \chi_k = \beta_1 \times \cdots \times \beta_N \times \tau_k,
\end{equation} 
where $\beta_m = {\left\{{u}_{p}^{(m)}\right\}}_{p=1}^{P_m}$ is the set of input functions 
for the $m$-th coordinate and $\tau_k = {\left\{{y}_{q_k}\right\}}_{q_k=1}^{Q_k}$ 
is the set of points of the discretized domain for $\mathcal{L}_k [G(\cdot,\dots,\cdot)]$ 
with $D_k = P_1 \cdots P_N Q_k$. 
Thus, the input function for each branch network is independent of the input functions 
for other branch networks and is used equally for all error terms. 
Furthermore, the same discretization over the domain of 
$\mathcal{L}_k [G(\cdot,\dots,\cdot)]$ is used for all tuples of input functions 
$({u}_{p_1}^{(1)},\dots,{u}_{p_N}^{(N)})$ for each error term.

We also assume that the linear operator $\mathcal{L}_k$ satisfies 
\begin{equation}~\label{eq:CondOpMION} 
    \mathcal{L}_k\left[\tilde{b}_{j_1}^{(1)}\cdots\tilde{b}_{j_N}^{(N)} t_i\right]
    (\mathbf{u}^{(1)},\dots,\mathbf{u}^{(N)},y)
    = \left(\prod_{m=1}^{N}\tilde{b}_{j_m}^{(m)}(\mathbf{u}^{(m)})\right)
    \mathcal{L}_k\left[t_i\right](y)
\end{equation} 
for all $i,j_m,k$. 
This implies the linear operator $\mathcal{L}_k$ is independent of input functions and 
acts on each trunk component $t_i$.

Let $\ast$ denote the Khatri-Rao product (column-wise Kronecker product)~\cite{Khatri1968}. 
Note that the Khatri-Rao product of two matrices $X\in\mathbb{R}^{R_1\times S}$ and 
$Y\in\mathbb{R}^{R_2\times S}$ is an $R_1 R_2 \times S$ matrix 
whose entry is represented as 
\begin{equation}~\label{eq:KhaRaoNota}
    {(X \ast Y)}_{r s} = {X}_{r_1 s} {Y}_{r_2 s},
\end{equation} 
where $r = {[r_1,r_2]}_{R_1,R_2}$.

Under the conditions~\eqref{eq:CondDataMION} and~\eqref{eq:CondOpMION}, 
we show the large matrix $A_{k,n}$ in~\eqref{eq:LLLSMION} can be factored into the 
product of $N+1$ smaller matrices with some permutations. 
For convenience, we define a sequence with the $j$-th term removed from the sequence 
${\left\{x_i\right\}}_{i=1}^{N}$ and denote it as $(x_1,\dots,\hat{x}_j,\dots,x_N)$. 
Also, for the generic associative binary operator $\circ$, 
we denote $\overset{N}{\underset{\substack{i=1\\i\ne j}}{\circ}} x_i$ 
as the repeated application of 
the operation to the sequence ${\left\{x_i\right\}}_{i=1}^{N}$ with $x_j$ omitted.

We define the tensor permutation matrix introduced in~\cite{Rakotonirina2005} in terms of the 
lexicographic order of the entry of a rank $N$ tensor. 
\begin{definition}\label{def:TensorPermMat}
    Let $\mathcal{T}\in{\mathbb{R}}^{D_1\times\cdots\times D_N}$ be a rank $N$ tensor of size 
    $D_1\times\cdots\times D_N$ and 
    $\mathbf{t} \in \mathbb{R}^{D_1\cdots D_N}$ be the lexicographically vectorized tensor 
    $\mathcal{T}$ whose ${[d_1,\dots,d_N]}_{D_1,\dots,D_N}$ 
    entry is $\mathcal{T}_{d_1,\dots,d_N}$.
    Let $\sigma \in S_N$ be a permutation, where
    $S_N$ is the set of all permutations of $\{1,\dots,N\}$.
    Suppose $\tilde{\mathcal{T}}\in{\mathbb{R}}^{D_{\sigma(1)}\times\cdots\times D_{\sigma(N)}}$ 
    be a transposed tensor satisfying 
    $\mathcal{T}_{d_1,\dots,d_N} = \tilde{\mathcal{T}}_{d_{\sigma(1)},\dots,d_{\sigma(N)}}$
    with the lexicographic vectorization $\tilde{\mathbf{t}}$ for 
    $D_{\sigma(1)}\times\cdots\times D_{\sigma(N)}$ tensors. 
    If a permutation matrix ${K}_{D,\sigma}\in{\mathbb{R} 
    }^{D_1\cdots D_N \times D_1\cdots D_N}$ satisfies 
    \begin{equation}\label{eq:TensorPermMat}
        {\tilde{\mathbf{t}}} = {K}_{D,\sigma} {\mathbf{t}}, 
    \end{equation}
    then we call ${K}_{D,\sigma}$ a tensor permutation matrix with dimension 
    $D=(D_1,\dots,D_N)$ and permutation $\sigma$.
\end{definition}
Note that $\tilde{\mathcal{T}}$ is the tensor transpose of $\mathcal{T}$ associated with 
$\sigma$ in~\cite{Pan2014} and ${K}_{D,\sigma}$ maps the lexicographic order of the entry of 
$\mathcal{T}$ to that of the entry of $\tilde{\mathcal{T}}$. 

Now, we state and prove the main theorem for a factorization of the system matrix 
in~\cref{eq:LLLSMION}.
\begin{theorem}~\label{thm:MIONFact}
    Under the conditions~\eqref{eq:CondDataMION} and~\eqref{eq:CondOpMION}, 
    the system matrix 
    $A_{k,n}\in \mathbb{R}^{P_1 \cdots P_N Q_k \times IJ_n}$ in the least squares 
    problem~\eqref{eq:LLLSMION} can be factored as
    \begin{equation*}~\label{eq:MIONFact}
        A_{k,n} = {K}_{\tilde{D}_k,\sigma_n}^{T}\left(\left(G_{n}\ast T_k\right)
        \otimes B_n\right),
    \end{equation*}
    with
    \begin{equation*}~\label{eq:MIONFactDef}
        \begin{alignedat}{2}
            G_n &= \overset{N}{\underset{\substack{m=1\\m\ne n}}{\ast}} H_{m}
            \in {\mathbb{R}}^{P_1\cdots \hat{P}_{n}\cdots P_{N}\times I},\\
            H_m &= B_{m}C_m^{T}\in{\mathbb{R}}^{P_m\times I},
        \end{alignedat}
    \end{equation*}
    where 
    $B_m = \left({\tilde{b}_{j}^{(m)}({\mathbf{u}}_{p}^{(m)})}\right) \in 
    {\mathbb{R}}^{P_m \times J_n}$ and $T_k = \left({\mathcal{L}_k\left[t_i\right]({y}_{q_k})}
    \right) \in {\mathbb{R}}^{Q_k\times I}$ are the $m$-th branch pre-output matrix and 
    the trunk output data with operator $\mathcal{L}_k$, respectively, 
    and 
    ${K}_{\tilde{D}_k,\sigma_n}\in{\mathbb{R}}^{P_1\cdots P_N Q_k\times P_1\cdots P_N 
    Q_k}$ is the tensor permutation matrix with dimension $\tilde{D}_k=(P_1,\dots,P_N,Q_k)$ 
    and the permutation
    \[\sigma_n = \bigl(\begin{smallmatrix} 1 & \cdots & n-1 & n & n+1 & \cdots & N & N+1\\ 
    1 & \cdots & n-1 & n+1 & n+2 & \cdots & N+1 & n\end{smallmatrix}\bigr)\in S_{N+1}.\]
\end{theorem}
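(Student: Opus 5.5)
The plan is an entrywise comparison of both sides. Fix $k$ and $n$. By~\eqref{eq:CondDataMION}, index the rows of $A_{k,n}$ by tuples $(p_1,\dots,p_N,q)$ in the lexicographic position ${[p_1,\dots,p_N,q]}_{P_1,\dots,P_N,Q_k}$, and index the columns by ${[i,j]}_{I,J_n}$. I will first compute the entries of $A_{k,n}$ directly. Expanding each $b_i^{(m)}=\sum_{j_m}c^{(m)}_{ij_m}\tilde b^{(m)}_{j_m}$ for $m\ne n$, linearity of $\mathcal{L}_k$ and~\eqref{eq:CondOpMION} give
\[
(A_{k,n})_{[p_1,\dots,p_N,q],[i,j]} = \Bigl(\prod_{m\ne n} b_i^{(m)}(\mathbf{u}^{(m)}_{p_m})\Bigr)\,\tilde b^{(n)}_j(\mathbf{u}^{(n)}_{p_n})\,\mathcal{L}_k[t_i](y_q).
\]
Since $b_i^{(m)}(\mathbf{u}^{(m)}_{p}) = \sum_j (B_m)_{pj}c^{(m)}_{ij} = (B_mC_m^T)_{pi}$, this equals
\[
\Bigl(\prod_{m\ne n}(H_m)_{p_m i}\Bigr)(T_k)_{qi}(B_n)_{p_n j}.
\]

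Next I will compute the entries of $(G_n\ast T_k)\otimes B_n$. Applying~\eqref{eq:KhaRaoNota} repeatedly, and using associativity of the Khatri--Rao product together with the composition rule ${[[a,b]_{R_1,R_2},c]}_{R_1R_2,R_3}={[a,b,c]}_{R_1,R_2,R_3}$ (immediate from \cref{def:LexIndDef}), the row ${[p_1,\dots,\hat p_n,\dots,p_N,q]}$ and column $i$ of $G_n\ast T_k$ is $\prod_{m\ne n}(H_m)_{p_m i}\,(T_k)_{qi}$. The Kronecker product with $B_n\in\mathbb{R}^{P_n\times J_n}$ then places row index $p_n$ last and column index $j$ last. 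So the entry of $(G_n\ast T_k)\otimes B_n$ at row ${[p_1,\dots,\hat p_n,\dots,p_N,q,p_n]}$ and column ${[i,j]}_{I,J_n}$ is exactly the product displayed above. The column orders of the two sides agree, so only the rows need to be matched.

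The remaining step, and the main obstacle, is the bookkeeping for the row permutation. View a column of $A_{k,n}$ as a vectorized tensor $\mathcal{T}$ of size $P_1\times\cdots\times P_N\times Q_k$. The corresponding column of the Kronecker-form matrix is the lexicographic vectorization of the transposed tensor $\tilde{\mathcal{T}}$, with axes ordered $(P_1,\dots,\hat P_n,\dots,P_N,Q_k,P_n)$, satisfying $\mathcal{T}_{d_1,\dots,d_{N+1}}=\tilde{\mathcal{T}}_{d_{\sigma_n(1)},\dots,d_{\sigma_n(N+1)}}$ for the stated $\sigma_n$. I need to check that the one-line notation of $\sigma_n$ matches this axis order in the convention of \cref{def:TensorPermMat}: the new slot sequence is $1,\dots,n-1,n+1,\dots,N+1,n$, which is what $\sigma_n$ lists. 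By \cref{def:TensorPermMat}, $\tilde{\mathbf t}=K_{\tilde D_k,\sigma_n}\mathbf t$ column by column, that is, $(G_n\ast T_k)\otimes B_n = K_{\tilde D_k,\sigma_n}A_{k,n}$. Since $K_{\tilde D_k,\sigma_n}$ is a permutation matrix, $K^{-1}=K^T$, and this yields the claimed factorization.
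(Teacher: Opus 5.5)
Your proposal is correct and follows the same route as the paper. You compute the $\bigl({[p_1,\dots,p_N,q]}_{P_1,\dots,P_N,Q_k},{[i,j]}_{I,J_n}\bigr)$ entry of $A_{k,n}$, identify it with the $\bigl({[p_1,\dots,\hat{p}_n,\dots,p_N,q,p_n]},{[i,j]}_{I,J_n}\bigr)$ entry of $(G_n\ast T_k)\otimes B_n$, and absorb the row reordering into $K_{\tilde{D}_k,\sigma_n}^{T}$. Along the way you also fill in details the paper leaves implicit: the check that $\sigma_n$ gives the axis order $(P_1,\dots,\hat{P}_n,\dots,P_N,Q_k,P_n)$ under the tensor permutation matrix definition (so that $(G_n\ast T_k)\otimes B_n = K_{\tilde{D}_k,\sigma_n}A_{k,n}$), and the correct attribution of the data condition to row indexing and the operator condition to the entry factorization, which the paper's proof cites the other way around.
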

Note that $H_m$ is the final output of the $m$-th branch network, and 
since the Khatri-Rao product is associative, we generalize~\eqref{eq:KhaRaoNota} to 
\begin{equation}~\label{eq:KhaRaoNotaGen}
    {(X_1 \ast \cdots \ast X_N)}_{r s} =  {(X_1)}_{r_1 s} \cdots {(X_N)}_{r_N s},
\end{equation} 
where $r = {[r_1,\dots,r_N]}_{R_1,\dots,R_N}$ for the matrices 
$X_n \in\mathbb{R}^{R_n \times S}$. A similar result holds for the Kronecker product. 
The tensor permutation matrix ${K}_{\tilde{D}_k,\sigma_n}$ is introduced to 
match the mixed row order of the product $(G_n\ast T_k)\otimes B_n$ to the lexicographic 
row order of the data $f_k$.
\begin{proof}
    By~\eqref{eq:CondOpMION}, for each $d_k$, there exist 
    positive integers $p_1,\dots,p_N$ and $q$ such that
    $d_k = {[p_1,\dots,p_N,q]}_{P_1,\dots,P_N,Q_k}$, where $\hat{u}_{d_k}^{(m)} = {u}_{p_m
    }^{(m)} \in \beta_m$ and
    $\hat{y}_{d_k} = {y}_{q}\in\tau_k$. Also, the tuple $(p_1,\dots,p_N,q)$ and $d_k$ have
    one-to-one correspondence as $1\leq p_m \leq P_m$ for $m=1,\dots,N$, $1\leq q \leq Q_k$,
    and $1\leq d_k \leq D_k = P_1 \cdots P_N Q_k$.

    Hence, by~\eqref{eq:CondDataMION}, $({[p_1,\dots,p_N,q]}_{P_1,\dots,P_N,Q_k},
    {[i,j]}_{I,J_n})$ entry of $A_{k,n}$ can be expressed as
    \begin{equation}~\label{eq:MIONFactExpress}
        \left(\prod_{\substack{m=1\\m\ne n}}^{N} {b}_{i}^{(m)}({\mathbf{u}}_{p_m}^{(m)})
        \right) \tilde{b}_{j}^{(n)}({\mathbf{u}}_{p_n}^{(n)})
        \mathcal{L}_k\left[t_i\right]({y}_{q}).
    \end{equation}

    On the other hand, by using similar argument in~\eqref{eq:KhaRaoNotaGen}, 
    \[({[p_1,\dots,\hat{p}_{n},\dots,p_{N},q,p_n]}_{P_1,\dots,\hat{P}_{n},\dots,P_{N},Q_k,P_n},
    {[i,j]}_{I,J_n})\] entry of the mixed product
    $\left(G_{n}\ast T_k\right) \otimes B_n$ is exactly~\eqref{eq:MIONFactExpress}.
    Using ${K}_{\tilde{D}_k,\sigma_n}$ to permute
    the row order of $\left(G_{n}\ast T_k\right) \otimes B_n$,
    we have $A_{k,n} = {K}_{\tilde{D}_k,\sigma_n}^{T}\left(\left(G_{n}\ast T_k\right)
    \otimes B_n\right)$ as desired.
\end{proof}

Using the result of \cref{thm:MIONFact}, let us observe the LS problem~\eqref{eq:LLLSMION}
and rearrange the data vector $f_k\in\mathbb{R}^{P_1\cdots P_N Q_k}$ into a tensor
$F_k\in\mathbb{R}^{P_1\times\cdots\times P_N\times Q_k}$. 
Both the $(p_1,\dots,p_N,q_k)$ entry of $F_k$ and 
the ${[p_1,\dots,p_N,q_k]}_{P_1,\dots,P_N,Q_k}$ entry of $f_k$ 
are 
\begin{equation}~\label{eq:MIONDataEntry}
    \mathcal{L}_k\left[G(\cdot,\dots,\cdot)\right](u^{(1)}_{p_1},\dots,u^{(N)}_{p_N},y_{q_k}).
\end{equation}
Therefore, the LS problem with respect to the last layer parameter is
\begin{equation}~\label{eq:LLLSMIONFinal}
    \min_{{C}_{n}}
    \sum_{k=1}^{K} \epsilon_{k} {\left\|f_k - {K}_{\tilde{D}_k,\sigma_n}^{T}
    \left(\left(G_{n}\ast T_k\right) \otimes B_n\right) \text{vec}(C_{n}^{T})
    \right\|}_{2}^{2} +
    \lambda_n {\left\|\text{vec}(C_{n}^{T})\right\|}_{2}^{2}.
\end{equation}

To write the normal equation concisely, we use the fact~\cite{Rao1970} that 
${(X \ast Y)}^{T} (X \ast Y) = (X^{T}X) \odot (Y^{T}Y)$
for all matrices $X\in\mathbb{R}^{R_1\times S}$ and $Y\in\mathbb{R}^{R_2\times S}$.
Here, the normal equation of~\eqref{eq:LLLSMIONFinal} is given as
\begin{equation}~\label{eq:NormalMION}
    \begin{alignedat}{2}
        &\left[\left(\sum_{k=1}^{K} \epsilon_{k} \left(G_n^{T} G_n\right) \odot 
        \left(T_k^{T} T_k\right)\right)
        \otimes \left(B_n^{T} B_n\right)\right] \text{vec}(C_{n}^{T})
        + \lambda_n \text{vec}(C_{n}^{T})\\
        &=\sum_{k=1}^{K} \epsilon_{k} \left({\left(G_{n}\ast T_k\right)}^{T} \otimes
        B_n^{T}\right) {K}_{\tilde{D}_k,\sigma_n}^{T} f_k.
    \end{alignedat}
\end{equation}
Introducing a long data matrix $\hat{F}_{k,n}\in\mathbb{R}^{P_n \times P_1\cdots
\hat{P}_{n} \cdots P_N Q_k}$ whose 
\[\left(p_n,{[p_1,\dots,\hat{p}_n,\dots{p}_N,q]}_{P_1,\dots,\hat{P}_n,\dots,P_N,Q_k}\right)\]
entry is~\eqref{eq:MIONDataEntry}, we can write~\eqref{eq:NormalMION} as a matrix equation:
\begin{equation}~\label{eq:NormalMatMION}
    B_n^{T} B_n C_n^{T} \left(\sum_{k=1}^{K} \epsilon_{k}
    \left(G_n^{T} G_n\right) \odot
    \left(T_k^{T} T_k\right)\right) + \lambda_n C_n^{T}
    = B_n^{T} \left(\sum_{k=1}^{K} \epsilon_{k} \hat{F}_{k,n}
    \left(G_{n}\ast T_k\right)\right).
\end{equation}
Note that $(G_n^{T} G_n) \odot (T_k^{T} T_k)$ can be effectively calculated without 
forming huge matrices since 
\begin{equation*}~\label{eq:MIONMatReduce}
    (G_n^{T} G_n) \odot (T_k^{T} T_k) = 
    \left(\overset{N}{\underset{\substack{m=1\\m\ne n}}{\odot}} (H_{m}^{T} H_{m})\right)
    \odot (T_k^{T} T_k),
\end{equation*} 
where each component is an $I \times I$ matrix generated from each network. 

In the case of unsupervised learning where the given data $F_k$ depends only on 
a single input argument $u^{(m_k)}$ of the operator $G(u^{(1)},\dots,u^{(N)})$, 
we can reduce the rank $(N+1)$ data tensor $F_k$ 
to the reduced matrix $\tilde{F}_{m_k}\in\mathbb{R}^{P_{m_k}\times Q_k}$, where $m_k$ is the 
corresponding index of the input function coordinate. We provide a related theorem.

\begin{theorem}~\label{thm:MIONDatReduce}
    Suppose the data tensor $F_k\in\mathbb{R}^{P_1\times\cdots\times P_N\times Q_k}$ 
    in~\eqref{eq:MIONDataEntry} is given as the case of unsupervised learning, where the data varies only
    along the $m_k$-th axis and the last axis.
    Then, 
    \begin{equation*}~\label{eq:MIONDatReduce}
        \hat{F}_{k,n} \left(G_{n}\ast T_k\right) = \begin{cases}
        \mathbf{1}_{P_n\times 1}
        \left(
        \overset{N}{\underset{\substack{m=1\\m\ne n}}{\odot}} \tilde{\mathbf{h}}^{T}_{m}
        \right)\quad&
        \text{for $n\ne m_k$},\\
        \left[\mathbf{1}_{P_n\times 1} \left(
        \overset{N}{\underset{\substack{m=1\\m\ne n}}{\odot}} \tilde{\mathbf{h}}^{T}_{m}
        \right)\right] \odot \left(\tilde{F}_{n} T_k\right)\quad&
        \text{for $n= m_k$},
        \end{cases}
    \end{equation*}
    where $\tilde{\mathbf{h}}^{T}_{m}\in{\mathbb{R}}^{1\times I}$ is a row vector of 
    length $I$ defined by
    \begin{equation*}~\label{eq:MIONDatDef}
        \tilde{\mathbf{h}}^{T}_{m} =
        \begin{cases*}
            {\mathbf{1}_{1\times P_m}} H_{m}\quad&\text{if $m\ne m_k$},\\
            {\mathbf{1}_{1 \times P_{m_k}}} (H_{m_k} \odot (\tilde{F}_{m_k} T_k))
            \quad&\text{if $m = m_k$}.
        \end{cases*}
    \end{equation*}
\end{theorem}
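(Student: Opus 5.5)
The plan is a direct entrywise computation. Everything follows from the index conventions in \cref{def:LexIndDef} and the generalized Khatri-Rao formula \eqref{eq:KhaRaoNotaGen}, together with the hypothesis on the data. That hypothesis says the $(p_1,\dots,p_N,q)$ entry of $F_k$ equals $(\tilde{F}_{m_k})_{p_{m_k} q}$, independently of the other $p_m$.

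\textbf{Writing out a generic entry.} Fix $p_n$ and $i$. By the definition of $\hat{F}_{k,n}$, its column index $[p_1,\dots,\hat{p}_n,\dots,p_N,q]_{P_1,\dots,\hat{P}_n,\dots,P_N,Q_k}$ is ordered exactly as the rows of $G_n\ast T_k$. Hence, by \eqref{eq:KhaRaoNotaGen} applied to $G_n\ast T_k = H_1\ast\cdots\ast\hat{H}_n\ast\cdots\ast H_N\ast T_k$, the $(p_n,i)$ entry of $\hat{F}_{k,n}(G_n\ast T_k)$ is
\[
\sum_{\substack{p_m,\ m\ne n}}\ \sum_{q=1}^{Q_k} (\tilde{F}_{m_k})_{p_{m_k} q}\left(\prod_{\substack{m=1\\m\ne n}}^{N} (H_m)_{p_m i}\right)(T_k)_{q i}.
\]
The only step needing care is checking that the row order of $G_n\ast T_k$ matches the column order of $\hat{F}_{k,n}$. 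This is immediate from the associativity remark following \cref{thm:MIONFact}, and it is the same ordering already used in the proof of that theorem.

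\textbf{Case $n\ne m_k$.} The summand is a product of factors, each depending on a single summation index. The summation therefore factorizes:
\begin{itemize}
\item for each $m\notin\{n,m_k\}$ we get the factor $\sum_{p_m}(H_m)_{p_m i}=(\mathbf{1}_{1\times P_m}H_m)_i$;
\item the remaining double sum over $p_{m_k}$ and $q$ equals $\sum_{p_{m_k}}(H_{m_k})_{p_{m_k} i}(\tilde{F}_{m_k}T_k)_{p_{m_k} i}=\bigl(\mathbf{1}_{1\times P_{m_k}}(H_{m_k}\odot(\tilde{F}_{m_k}T_k))\bigr)_i$.
\end{itemize}
These are exactly the $i$-th entries of the $\tilde{\mathbf{h}}_m^T$. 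The result does not depend on $p_n$, so the whole matrix equals $\mathbf{1}_{P_n\times 1}$ times the Hadamard product of the row vectors $\tilde{\mathbf{h}}_m^T$, $m\ne n$.

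\textbf{Case $n=m_k$.} Now $p_{m_k}=p_n$ is not summed. The sum over $q$ gives $(\tilde{F}_n T_k)_{p_n i}$, and each remaining sum over $p_m$, $m\ne n$, gives $(\mathbf{1}_{1\times P_m}H_m)_i=(\tilde{\mathbf{h}}_m)_i$. Since $m\ne m_k$ in every such factor, only the first branch of the definition of $\tilde{\mathbf{h}}_m$ is used. The $(p_n,i)$ entry is therefore $(\tilde{F}_nT_k)_{p_n i}\prod_{m\ne n}(\tilde{\mathbf{h}}_m)_i$. This is the $(p_n,i)$ entry of $\bigl[\mathbf{1}_{P_n\times 1}(\odot_{m\ne n}\tilde{\mathbf{h}}_m^T)\bigr]\odot(\tilde{F}_nT_k)$, as claimed.

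I expect no real obstacle. The bookkeeping of index orders is the only place an error could enter, and it is settled by \eqref{eq:KhaRaoNotaGen}.
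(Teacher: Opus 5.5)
Your proposal is correct and matches the paper's proof essentially step for step. Both write the $(p_n,i)$ entry as the nested sum over $p_m$ ($m\ne n$) and $q$ using the Khatri-Rao entry formula, factor that sum separately in the cases $n\ne m_k$ and $n=m_k$, and then broadcast along the $p_n$ axis; your remark that only the first branch of the definition of $\tilde{\mathbf{h}}_m$ is used when $n=m_k$ makes explicit a point the paper leaves implicit.
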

\begin{proof}
    By expanding the summation in the matrix multiplication $\hat{F}_{k,n} \left(G_{n}\ast 
    T_k\right)$ of intermediate dimension $P_1\cdots \hat{P}_{n}\cdots P_{N} Q_k$ into $N$
    summations, we have
    \begin{equation}~\label{eq:MIONDatReduceSum}
        \begin{alignedat}{2} 
            {\left(\hat{F}_{k,n} \left(G_{n}\ast T_k\right)\right)}_{p_n i}
            &= \sum_{l} {(\hat{F}_{k,n})}_{p_n l} {(G_{n}\ast T_k)}_{l i}\\
            &= \sum_{p_1}\cdots\widehat{\sum_{p_{n}}}\cdots\sum_{p_N}\sum_{q}
            {(\tilde{F}_{m_k})}_{p_{m_k} q}
            \left(\prod_{\substack{m=1\\m\ne n}}^{N} {(H_{m})}_{p_m i}\right) {(T_k)}_{q i}.
        \end{alignedat} 
    \end{equation}
    If $n \ne m_k$, by rearranging summations with respect to their corresponding
    indices,~\eqref{eq:MIONDatReduceSum} becomes
    \begin{equation}\label{eq:MIONDatReduceSumNE}
        \begin{alignedat}{2}
            \Bigl(\hat{F}_{k,n} & \bigl(G_{n}\ast T_k\bigr)\Bigr)_{p_n i}\\ 
            &=\left(\prod_{\substack{m=1\\m\ne n,m_k}}^{N} \sum_{p_m} {(H_m)}_{p_m i}\right)
            \left(\sum_{p_{m_k}} \left({(H_{m_k})}_{p_{m_k} i} \sum_{q}
            {(\tilde{F}_{m_k})}_{p_{m_k} q} {(T_k)}_{q i}\right)\right)\\
            &=\left(\prod_{\substack{m=1\\m\ne n,m_k}}^{N} {\left({\mathbf{1}_{1 \times P_m}}
            H_{m}\right)}_{i}\right)
            \left(\sum_{p_{m_k}} {(H_{m_k})}_{p_{m_k} i}
            {(\tilde{F}_{m_k}T_k)}_{p_{m_k} i}\right)\\          
            &= \left(\prod_{\substack{m=1\\m\ne n,m_k}}^{N} {\left({\mathbf{1}_{1 \times P_m}}
            H_{m}\right)}_{i}\right) {\left({\mathbf{1}_{1 \times P_{m_k}}}
            (H_{m_k} \odot (\tilde{F}_{m_k} T_k))\right)}_{i}.
        \end{alignedat}
    \end{equation}
    Therefore, by reordering products and broadcasting along the first axis of the matrix
    as~\eqref{eq:MIONDatReduceSumNE} does not depend on the first axis variable $p_n$ we have
    \begin{equation*}~\label{eq:MIONDatReduce1}
        \hat{F}_{k,n} \left(G_{n}\ast T_k\right) = \mathbf{1}_{P_n\times 1}
        \left(\overset{N}{\underset{\substack{m=1\\m\ne n}}{\odot}} \tilde{\mathbf{h}}^{T}_{m}
        \right).
    \end{equation*}

    On the other hand, if $n = m_k$,~\eqref{eq:MIONDatReduceSum} becomes
    \begin{equation}~\label{eq:MIONDatReduceSumE}
        \begin{alignedat}{2}
            {\left(\hat{F}_{k,n} \left(G_{n}\ast T_k\right)\right)}_{p_n i} &= 
            \left(\prod_{\substack{m=1\\m\ne n}}^{N} \sum_{p_m} {(H_m)}_{p_m i}\right)
            \left(\sum_{q}{(\tilde{F}_{n})}_{p_{n} q} {(T_k)}_{q i}\right)\\
            &=\left(\prod_{\substack{m=1\\m\ne n}}^{N} {\left({\mathbf{1}_{1 \times P_m}} 
            H_{m} \right)}_{i}\right) {\left(\tilde{F}_{n} T_k\right)}_{p_n i}.
        \end{alignedat}
    \end{equation}
    Therefore, by reordering products and broadcasting terms along the first axis of the
    matrix which does not depend on the variable $p_n$ for~\eqref{eq:MIONDatReduceSumE}, we have
    \begin{equation*}~\label{eq:MIONDatReduce2}
        \hat{F}_{k,n} \left(G_{n}\ast T_k\right) = 
        \left[\mathbf{1}_{P_n\times 1} \left(\overset{N}{\underset{\substack{m=1\\m\ne n}}{\odot}}
        \tilde{\mathbf{h}}^{T}_{m}
        \right)\right] \odot \left(\tilde{F}_{n} T_k\right),
    \end{equation*}
    which concludes the proof. 
\end{proof}

This theorem enables us to efficiently compute each component of the matrix 
equation~\eqref{eq:NormalMatMION}. 
Finally, we can use \cref{prop:SPSDSyl} to find the minimizer $C_n$ of the LS 
problem~\eqref{eq:LLLSMIONFinal} for the $n$-th last layer parameter.

Now, we present the entire LS step for MIONet (\Cref{alg:ALSMION}). 
Since the loss is the sum of squares of multilinear 
functions with last layer parameters $(\theta_{1}^{L},\dots,\theta_{N}^{L})$ of $N$ branches, 
along with regularization terms, we can formulate an alternating least squares (ALS) problem. 
Define the optimizing order of the last layer parameters as a permutation 
$\pi\in S_N$. 
For $m=1,\dots,N$, fix all parameters except $\theta_{\pi(m)}^{L}$, then solve the LS problem 
in terms of $\theta_{\pi(m)}^{L}$ to update $\theta_{\pi(m)}^{L}$. Note that a unique 
minimizer is guaranteed if $\lambda_{\pi(m)}>0$. 
Finally, the entire process is repeated until the stopping criterion is 
met, such as changes in the appropriate norm of the parameters or the loss being 
smaller than a specific threshold. The complete LSGD algorithm for MIONet 
(\Cref{alg:LSGDMION}) is similar to the LSGD algorithm for DeepONet introduced 
in~\cite[Algorithm 2]{Choi2025}. 
To speed up the ALS step, one may introduce enhanced line search (ELS)~\cite{Rajih2008} 
or partitioned ALS (PALS)~\cite{Tichavsky2016}. 

\begin{algorithm}[tb]
    \caption{Alternating Least Squares Step for MIONet}\label{alg:ALSMION}
    \hspace*{\algorithmicindent} \textbf{Output: }
    Optimized last layer parameters $\theta_{1}^{L},\dots,\theta_{N}^{L}$
    \begin{algorithmic}[1]
        \Function{ALS}{$\theta_{1}^{B},\dots,\theta_{N}^{B},\theta^{T},
        \theta_{1}^{L},\dots,\theta_{N}^{L}$}
            \While{Stopping criterion is not met}
            \State{Choose permutation $\pi\in S_N$}
            \For{$m=1,\dots,N$}
                \State{$\theta_{\pi(m)}^{L} \gets
                LS_{\pi(m)}(\theta_{1}^{B},\dots,\theta_{N}^{B},\theta^{T},
                \theta_{1}^{L},\dots,\theta_{N}^{L})$}

                \(\triangleright\) Solve the LS problem in terms of $\theta_{\pi(m)}^{L}$
                and update
            \EndFor{}
            \EndWhile{}
        \EndFunction{}
    \end{algorithmic}
\end{algorithm}

\begin{algorithm}[tb]
    \caption{Hybrid Least Squares/Gradient Descent for MIONet}\label{alg:LSGDMION}
    \hspace*{\algorithmicindent} {$\Theta^{B} = (\theta_{1}^{B},\dots,\theta_{N}^{B})$, 
    $\Theta^{L} = (\theta_{1}^{L},\dots,\theta_{N}^{L})$}
    \begin{algorithmic}[1]
        \State{$(\Theta^{B},\theta^{T},\Theta^{L})\gets 
        (\Theta_{0}^{B},\theta_{0}^{T},\Theta_{0}^{L})$: 
        Initial parameters for the branches and the trunk}
        \State{$\Theta^{L} \gets ALS(\Theta^{B},\theta^{T},\Theta^{L})$}
        \Comment{Solve the ALS problem for each $\theta_{m}^{L}$ in the full batch}
        \For{$i=1,\dots$}
            \State{$(\Theta^{B},\theta^{T}) \gets
            GD(\Theta^{B},\theta^{T},\Theta^{L})$}
            \Comment{Use a gradient descent type optimizer to find
            $\Theta^{B}$ and $\theta^{T}$ }
            \State{$\Theta^{L} \gets ALS(\Theta^{B},\theta^{T},\Theta^{L})$}
        \EndFor{}
    \end{algorithmic}
\end{algorithm}

\section{Experimental results}\label{sec4}

In this section, we present numerical experiments on various PDEs with 2-branch MIONet 
to evaluate the proposed hybrid training schemes, ALS+Adam. 
We report training results and compare the proposed method,  
ALS+Adam, with the 
conventional Adam training in terms of training loss convergence behavior and 
relative error for unseen input functions. 

We propose ALS+Adam as a practical LSGD method for MIONets, which generalizes the LS+Adam 
method for DeepONets in~\cite{Choi2025} as follows. 
In the initial stage, we train all parameters using Adam for a moderate number of epochs. 
Then, we switch to the hybrid stage, where we use the ALS step to optimize the last layer 
parameters of each branch network in turn. 
After that, the ALS step is applied once every few Adam epochs for the 
hidden layer parameters. See \cref{alg:ALSAdam}. 

\begin{algorithm}[tb]
    \caption{ALS+Adam for MIONet}\label{alg:ALSAdam}
    \hspace*{\algorithmicindent} {$\Theta^{B} = (\theta_{1}^{B},\dots,\theta_{N}^{B})$, 
    $\Theta^{L} = (\theta_{1}^{L},\dots,\theta_{N}^{L})$}
    \begin{algorithmic}[1]
        \State{$(\Theta^{B},\theta^{T},\Theta^{L})\gets 
        (\Theta_{0}^{B},\theta_{0}^{T},\Theta_{0}^{L})$: Initial parameters}

        \For{$i=1,\dots,I_0$}
            \State{$(\Theta^{B},\theta^{T},\Theta^{L}) \gets
            Adam(\Theta^{B},\theta^{T},\Theta^{L})$}
            \Comment{Initial Adam stage for all parameters} 
        \EndFor{}

        \State{$\Theta^{L} \gets ALS(\Theta^{B},\theta^{T},\Theta^{L})$}
        \Comment{Solve the ALS problem for each $\theta_{m}^{L}$ in the full data batch}
        \For{$i=1,\dots$} \Comment{Work unit block}
            \For{$j=1,\dots,J_0$}
                \State{$(\Theta^{B},\theta^{T}) \gets Adam(\Theta^{B},\theta^{T},\Theta^{L})$} 
                \Comment{Use Adam for hidden layer parameters}
            \EndFor{}
            \State{$\Theta^{L} \gets ALS(\Theta^{B},\theta^{T},\Theta^{L})$}
        \EndFor{}
    \end{algorithmic}
\end{algorithm}

For 2-branch MIONet training in supervised learning, 
we use the mean square error (MSE) loss
\begin{equation}~\label{eq:Loss_Sup_MION}
    \epsilon_{1} L_{\text{data}} +
    \lambda_1 {\|C_1\|}_F^2 + \lambda_2 {\|C_2\|}_F^2,
\end{equation} and for the unsupervised learning, we use PI-loss
\begin{equation}~\label{eq:Loss_Unsup_MION}
    \epsilon_{1} L_{\text{data}} +
    \epsilon_{2} L_{\text{physics}} + 
    \lambda_1 {\|C_1\|}_F^2 + \lambda_2 {\|C_2\|}_F^2,
\end{equation} where ${\|\cdot\|}_F$ denotes the Frobenius norm, 
$\epsilon_{1} = 1$, 
$L_{\text{data}}$ is the $L^2$ MSE on the data pairs $(\hat{u}_{d_1},\hat{y}_{d_1})$
where $\hat{y}_{d_1}$ corresponds to the given data points of the governing PDE, 
and $L_{\text{physics}}$ is the $L^2$ MSE of the PDE residuals
$(\hat{u}_{d_2},\hat{y}_{d_2})$ where $\hat{y}_{d_2}$ corresponds to
the residual computation points.

Supervised learning requires labeled solutions for all possible pairs 
of input functions $(u^{(1)},u^{(2)})$.  
On the other hand, unsupervised learning does not require precomputed 
reference solutions during the training stage. Here, 
we ensure that each data tensor $F_k$ depends only on one input argument of the 
target operator. 
According to \cref{thm:MIONDatReduce}, we do not need 
to form large matrices, and efficient computation is possible in the ALS steps. 

In each experiment, we use hyperparameters and network settings similar to those used 
for DeepONets~\cite{Choi2025}. 
We use the Adam optimizer with $\text{lr} = 10^{-3}$ and $(\beta_1,\beta_2) = (0.99,0.999)$. 
The Adam momentums are maintained between the LS steps. 
He normal initialization~\cite{He2015} is used for parameter initialization, and 
the Swish function $x/(1+e^{-x})$ is used as the activation function. 
For training with Adam-only, no regularization terms for the last layer parameters are used, 
but for training with ALS+Adam, these regularization terms are applied with the same positive 
weights across the last layer parameters of each branch network, 
i.e., $\lambda_1 = \lambda_2 = \lambda$. 

We define one work unit (WU) as one Adam epoch for training all parameters 
and, in the hybrid stage, as a cycle of one Adam epoch followed by one ALS step 
in \cref{alg:ALSAdam}. 
For training with ALS+Adam, we assign 50 WUs for the initial stage and 
then switch to the hybrid stage after applying one ALS step. 
We empirically found that using only one ALS step for each WU is sufficient, as it greatly 
reduces computational time while maintaining convergence behavior. 
In each experiment, the last parameters of the first branch network (branch $f$) 
is optimized first in the ALS step.

For the training data, we generate 1,000 data instances independently for each of 
the two input functions to generate 1,000,000 pairs of input functions, which are all  
possible combinations. 
For validation data, we generate 4,000 data pairs. 
In the Adam-only training stage, $100 \times 100$ data block is used as one batch 
in each Adam iteration. 
On the other hand, training with ALS+Adam uses the batch size of $50 \times 50$
in Adam iterations. 
All experiments are repeated three times using different seeds for random 
initialization of parameters and batching for the Adam iterations. 

For each model structure and details for problem statement and data generation, 
refer to \cref{Tab:MION_config} and the corresponding subsections. 
All computations were performed using Google JAX~\cite{jax2018} on a machine 
with Intel Xeon Gold 6430 processors and NVIDIA GeForce RTX 4090 with 24 GB memory. 

\begin{table}
    \caption{\textit{Inputs for networks, network structures and 
    weight of MIONet models.} 
    IC and BC stand for initial condition and boundary condition, respectively.
    FCN and CNN stand for fully connected network and convolutional
    neural network, respectively. CNN structures are described in the corresponding
    subsections. Swish activation is used on all branches and trunks.}
    \centering 
    {\footnotesize
    \begin{tabular}{c c c c l c c}\toprule
        {Equation} & {Type} & {Network} & {Input} & 
        {{\begin{tabular}{c}Network structure\end{tabular}}} & 
        $\lambda$ & $\epsilon_2 $\\\midrule
        \multirow{6}{*}{{\begin{tabular}{c}Reaction-\\diffusion\end{tabular}}} & 
        \multirow{6}{*}{{\begin{tabular}{c}Supervised, \\Nonlinear\end{tabular}}} &
        {\begin{tabular}{c}Branch\\$f$\end{tabular}} & 
        {\begin{tabular}{c}1D\\Source\end{tabular}} & 
        {\begin{tabular}{l} FCN\\$[33, 150, 150, 150]$\end{tabular}} & 
        \multirow{6}{*}{$10^{-8}$} &
        \multirow{6}{*}{--}\\\cmidrule(lr){3-5} 
        & & {\begin{tabular}{c}Branch\\$D$\end{tabular}} & 
        {\begin{tabular}{c}1D\\Diffusivity\end{tabular}} & 
        {\begin{tabular}{l} FCN\\ $[33, 150, 150, 150]$\end{tabular}} & \\\cmidrule(lr){3-5} 
        & & Trunk & {$ (x,t)$} & 
        {\begin{tabular}{l} FCN\\ $[2, 150, 150, 150]$\end{tabular}} & & \\\midrule 
        \multirow{6}{*}{Advection} & 
        \multirow{6}{*}{{\begin{tabular}{c}Unsupervised, \\Linear\end{tabular}}} &
        {\begin{tabular}{c}Branch\\$f$\end{tabular}} & 
        {\begin{tabular}{c}1D\\Source\end{tabular}} & 
        {\begin{tabular}{l} FCN\\ $[33, 100, 100, 100]$\end{tabular}} & 
        \multirow{6}{*}{$10^{-6}$} &
        \multirow{6}{*}{$10^{-1}$}\\\cmidrule(lr){3-5} 
        & & {\begin{tabular}{c}Branch\\$PQ$\end{tabular}} & 
        BC+IC & 
        {\begin{tabular}{l} FCN\\ $[65, 100, 100, 100]$\end{tabular}} & \\\cmidrule(lr){3-5}
        & & Trunk & {$ (x,t)$} & 
        {\begin{tabular}{l} FCN\\ $[2, 100, 100, 100]$\end{tabular}} & & \\\midrule 
        \multirow{6}{*}{Poisson} & 
        \multirow{6}{*}{{\begin{tabular}{c}Unsupervised, \\Linear\end{tabular}}} &
        {\begin{tabular}{c}Branch\\$f$\end{tabular}} & 
        {\begin{tabular}{c}2D\\Source\end{tabular}} & 
        {\begin{tabular}{l} CNN + FCN\\$[1024, 150, 150]$\end{tabular}} & 
        \multirow{6}{*}{$10^{-12}$} &
        \multirow{6}{*}{$10^{-4}$}\\\cmidrule(lr){3-5} 
        & & {\begin{tabular}{c}Branch\\$g$\end{tabular}} & 
        BC & 
        {\begin{tabular}{l} FCN\\ $[129, 150, 150, 150]$\end{tabular}} & \\\cmidrule(lr){3-5} 
        & & Trunk & {$(x,y)$} & 
        {\begin{tabular}{l} FCN\\ $[2, 150, 150, 150]$\end{tabular}} & & \\\bottomrule 
    \end{tabular}~\label{Tab:MION_config}
    }
\end{table}

\subsection{Reaction-diffusion with variable source and diffusivity}\label{subsec41} 

In this section, we consider a reaction-diffusion equation with variable source and 
diffusivity as inputs: 
\begin{equation*}\label{eq:DiffReact}
    \begin{alignedat}{3} 
        \frac{\partial u}{\partial t} = \frac{\partial}{\partial x} \left(D(x)
        \frac{\partial u}{\partial x}\right) &+ R(u) + f(x),\quad
        &&(x,t)\in(0,1)\times(0,1],\\ 
        u(x,0) &= 0,\quad &&x\in(0,1),\\ 
        u(0,t) = u(1,t) &= 0,\quad &&t\in(0,1], 
    \end{alignedat} 
\end{equation*} 
where $R(u) = 0.01 u^2$. 
Here, we aim to learn a solution operator that maps the source $f$ and 
diffusivity $D$ to the solution $u$ via MIONet. 
The inputs are generated from a Gaussian process~(GP) in the interval $[0,1]$ 
with zero mean and a squared exponential covariance kernel  
\begin{equation}\label{eq:SqExpKer}
    k(x_1,x_2) = \sigma^2 \text{exp}\left(-\frac{{|x_1 - x_2|}^2}{2l^2}\right), 
\end{equation} 
with a pair of scale and variance parameters $(l,\sigma^2) = (0.2, 1)$ for $f$ 
and $(l,\sigma^2) = (0.2, 0.35)$ for $D$.
The input functions $f$ and $D$ are discretized at $33$ equidistant grid points of $[0,1]$, and 
the output function is evaluated at $33\times33$ equidistant grid points of 
${[0,1]}^2$. 
The reference solutions are generated by the finite difference method~(FDM)
using the Crank-Nicolson scheme and the first Taylor approximation for the reaction term.
The computational grid is size $129\times257$ in 
the space domain and time domain. 
For the training data, we note that 1,000,000 reference solutions from every pair of $(f,D)$ 
need to be prepared, which takes about 8 GB of memory. 

\subsection{Constant coefficient advection with variable source and IBC}\label{subsec42}

In this section, we solve a 1D constant coefficient advection equation with a source term 
via MIONet\@:
\begin{equation*}~\label{eq:Advec}
    \begin{alignedat}{3} 
        \frac{\partial u}{\partial t} + a\frac{\partial u}{\partial x} &= f(x),\quad
        &&(x,t)\in{(0,1]}^2,\\ 
        u(x,0) &= P(x),\quad &&x\in[0,1],\\ 
        u(0,t) &= Q(t),\quad &&t\in[0,1], 
    \end{alignedat}
\end{equation*}
where $a$ is a fixed constant and $P(0)=Q(0)$.
We aim to learn a solution operator which maps the source $f$, BC $P$, and IC $Q$ to the 
solution $u$.

Note that the analytical solution is given as follows: 
\begin{equation*}~\label{eq:ConstAdvecSol}
    u^{*}(x,t) = \begin{cases}
    \frac{1}{a}\left(F(x)-F(x-at)\right) + P(x-at), \quad &x-at \geq 0,\\
    \frac{1}{a}\left(F(x)-F(0)\right) + Q(t-\frac{x}{a}), \quad &x-at < 0,
\end{cases}
\end{equation*}
where $F(x) = \int_{0}^{x} f(s)\, ds$ is the antiderivative of $f$.
Since this solution may have non-differentiable cusps along the line $x-at=0$, 
it is challenging to generate such a solution by minimizing the PI-loss via 
automatic differentiation in unsupervised learning. 
Therefore, we introduce an additional condition for input functions 
--- $f(0)=0$ and $P'(0) = -\frac{1}{a}Q'(0)$ --- to ensure differentiability of the solution. 
Also, instead of having separate IC and BC as input functions, $P$ and $Q$, they can be 
naturally concatenated into one input function along the domains of IC and BC\@.
Let $\mathbf{p}=[P(0)\cdots P(1)]$ and $\mathbf{q}=[Q(0)\cdots Q(1)]$ be the discretizations of 
$P$ and $Q$ along their domains, $[0,1]\times\{0\}$ and $\{0\}\times[0,1]$, respectively. 
Since $P(0)=Q(0)$, we can concatenate $\mathbf{p}$ and $\mathbf{q}$ by 
$\mathbf{r}=[Q(1)\cdots Q(0)=P(0) \cdots P(1)]$, where $\mathbf{q}$ is flipped and one of the 
duplicate values $P(0)$ or $Q(0)$ is removed. 
$\mathbf{r}$ will be used for the second input argument of the operator.

To generate input functions $f$, $P$ and $Q$, we use the GP with zero mean and a squared 
exponential covariance kernel in \cref{eq:SqExpKer}.
First, we sample $g$ from the GP in the interval $[0,1]$ with scale factor $l=0.2$ and 
variance $\sigma^2 = 1$, and $h$ from the GP in the interval $[-a,1]$ with $l=0.2$ and 
$\sigma^2 = 1$. Then, we set 
\begin{equation*}~\label{eq:AdvecInputs}
    \begin{alignedat}{3} 
        f(x) &= g(x) - g(0),\quad &&x \in [0,1],\\
        P(x) &= h(x),\quad &&x \in [0,1],\\
        Q(t) &= h(-at),\quad &&t \in [0,1].
    \end{alignedat}
\end{equation*} 
The generated $f$, $P$ and $Q$ satisfy $f(0)=0$, $P(0)=Q(0)$ and $P'(0) = -\frac{1}{a}Q'(0)$. 
In this problem, we choose $a = 0.5$. 
The original input functions $f$, $P$ and $Q$ are discretized at $33$ equidistant grid points 
of $[0,1]$, and the concatenated input $\mathbf{r}$ is a vector of length $65$. 
The output function is evaluated on $33\times33$ equidistant grid points of ${[0,1]}^2$.
The antiderivative of the exact solution is computed by the composite trapezoidal rule in 
finer grids of size $129$ instead of $33$. 

\subsection{2D Poisson equation with variable source and Dirichlet BC}\label{subsec43}

In this section, we solve a 2D Poisson equation on the unit square with Dirichlet BC via 
MIONet\@:  
\begin{equation}~\label{eq:Poisson}
    \begin{alignedat}{3} 
        -\nabla\cdot\left(\kappa\nabla u\right) &= f ,\quad &&(x,y)\in\Omega={(0,1)}^2,\\ 
        u &= g,\quad &&(x,y)\in\partial\Omega, 
    \end{alignedat}
\end{equation} 
where $\kappa\equiv 1$ and the MIONet takes two inputs, source $f$ and Dirichlet BC $g$, and 
generates a solution $u$.
The reference solutions are generated by the finite difference method on finer grids of size 
$129 \times 129$.

The 2D input source $f$ is generated from a GP with zero mean and a 2D squared exponential 
covariance kernel
\begin{equation*}~\label{eq:SqExpKer2D}
    k(x_1,x_2,y_1,y_2) = \sigma^2 \text{exp}\left(-\frac{{|x_1 - x_2|}^2}{2{l_x}^2}
    -\frac{{|y_1 - y_2|}^2}{2{l_y}^2}\right),
\end{equation*} 
with scale factors $l_x=l_y=0.2$ and variance $\sigma^2 = 0.1$. 
We take $33\times33$ equidistant grid points in ${[0,1]}^2$ for the 
discretization of the input function and as the evaluation points of the output function. 
The CNN of the branch network consists of three layers with $3\times 3$, $2\times 2$, and 
$2\times 2$ kernels each with $2 \times 2$ strides. Since the channel sizes are $[1,16,32,64]$, 
the output of the CNN is a $4\times 4$ image of $64$ channels. 

For the BC input function $g$, we consider a flattened 1D function $\tilde{g}$ on the interval
$[0,4]$, generated by a GP with zero mean and periodic covariance kernel 
\begin{equation*}~\label{eq:PeriodicKer}
    k(x_1,x_2) = \sigma^2 \text{exp}\left(-\frac{2}{{l}^2}\sin^{2}
    {\left(\frac{\pi|x_1 - x_2|} {p}\right)}\right),
\end{equation*} 
where the scale factor, the period, and the variance are $l = 0.3$, $p = 4$, and 
$\sigma^2 = 0.1$, respectively, 
so that $g(\mathbf{h}(t)) = \tilde{g}(t)$ where 
$\mathbf{h}\colon[0,4]\to\partial\Omega$ is the arc length parametrization of $\partial\Omega$
such that 
\begin{equation*}~\label{eq:PoisFlatBC}
    \begin{alignedat}{3} 
        \mathbf{h}(t) &=
        \begin{cases}
            (t,0),&t\in[0,1),\\ 
            (1,t-1),&t\in[1,2),\\ 
            (3-t,1),&t\in[2,3),\\ 
            (0,4-t),&t\in[3,4]. 
        \end{cases}
    \end{alignedat}
\end{equation*} 
The input function $\tilde{g}$ is discretized on $129$ equidistant grid points in $[0,4]$, 
and the output function is evaluated on $33\times33$ equidistant grid points in 
${[0,1]}^2$. 

\subsection{Results}\label{subsec44}

\begin{figure}
    \centering
    \begin{subfigure}{\textwidth}
        \centering
        \includegraphics[width=\linewidth]{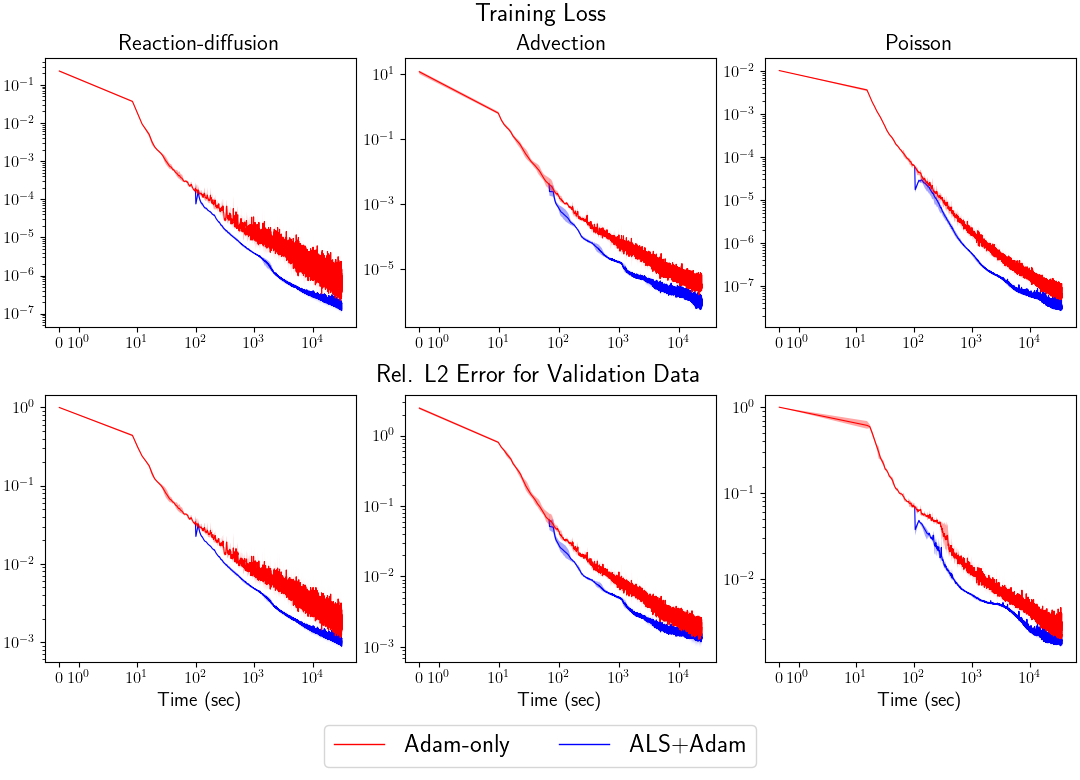}
    \end{subfigure}
    \caption{\textit{Solving various PDE problems via MIONet: 
    Adam-only (Red) vs. ALS+Adam (Blue).} 
    The top row denotes the training loss (without regularization terms in 
    \cref{eq:Loss_Sup_MION,eq:Loss_Unsup_MION}) for different seeds, 
    and the bottom row shows the mean relative $L^2$ error 
    for the validation data over time by the seeds. 
    The shaded areas represent the maximum and minimum values among the 
    seeds, and the solid lines represent their averages. 
    The plots are drawn on log-log scales.
    }\label{fig:Loss}
\end{figure} 

\begin{figure}
    \centering
    \begin{subfigure}{.95\textwidth}
        \centering
        \includegraphics[width=\linewidth]{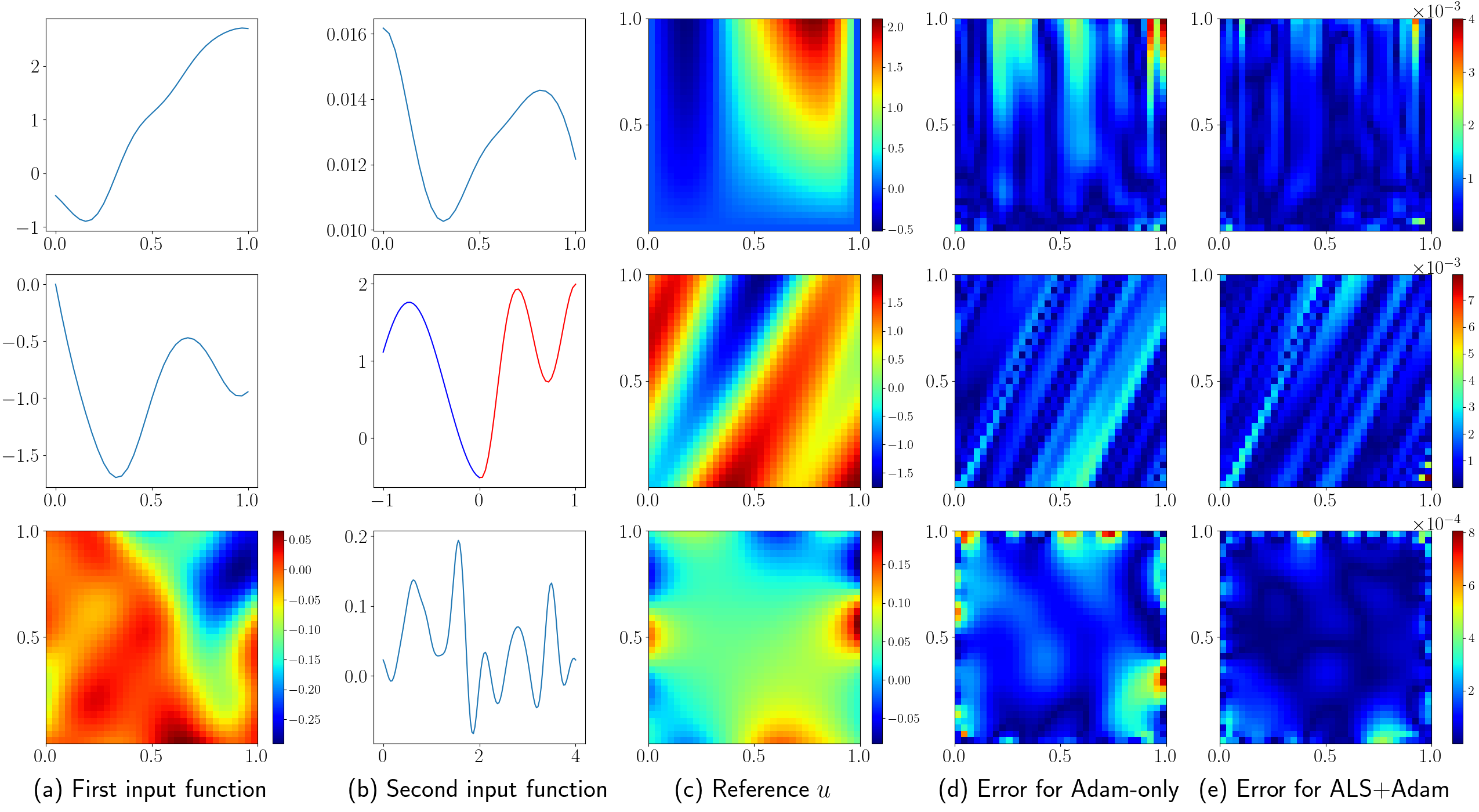}
    \end{subfigure}
    \caption{\textit{Test data evaluation for trained MIONet: 
    Adam-only vs. ALS+Adam.}
    From the top, the reaction-diffusion, advection, and Poisson 
    examples are illustrated. 
    The models are evaluated with parameters trained for  
    10,000 seconds. 
    }\label{fig:Test_result}
\end{figure}

As shown in \cref{fig:Loss}, training with ALS+Adam outperforms classical Adam training 
in terms of training loss decay and model performance (relative $L^2$ error for unseen data) 
in both supervised and unsupervised learning. 

\cref{fig:Test_result} illustrates the model errors of Adam-only's and ALS+Adam's for unseen 
pairs of test data functions at specific training time. Here, the errors for ALS+Adam training 
are significantly smaller than Adam-only's. 

We note that errors in both training results tend to form in specific patterns. 
For the reaction-diffusion example, the error grows as $t$ increases. In the advection example, 
the errors tend to form lines parallel to the line $x - 0.5 t = 0$. 
For the Poisson's equation, the errors are concentrated near the boundary. 

\section{Conclusion}\label{sec5}

In this paper, we propose a novel method to improve the training of vanilla MIONets, which 
generalizes the LSGD method for vanilla DeepONets~\cite{Choi2025}. 
By interpreting the general $L^2$ type of loss in terms of the last 
layer parameters of branch networks, we can view it as a sum of squared multilinear functions.
After that, we apply the ALS step 
to find the optimal set of last layer parameters that 
minimizes the loss, where we optimize by solving the corresponding LS problem of each last 
layer in turn.

Since each LS system is too large to handle directly, we factor the large matrix into 
small matrices corresponding to each branch and trunk network. 
Those small matrices constitute the original large matrix using the Kronecker and Khatri-Rao 
products together with an appropriate tensor permutation matrix. 
The solution of the LS system can be found in an elementary manner without forming large 
matrices. 
Finally, the LSGD method for MIONet alternates between the ALS step optimizing the last layer 
parameters and the GD step for the hidden layer parameters. 
The numerical experiments involving a nonlinear PDE with  
supervised learning and linear PDEs with PI-loss show that 
our method (ALS+Adam) accelerates training and generates better results with unseen 
data compared to the conventional Adam training. 

\bibliographystyle{siam}
\bibliography{references}

\end{document}